\def\BibTeX{{\rm B\kern-.05em{\sc i\kern-.025em b}\kern-.08em
    T\kern-.1667em\lower.7ex\hbox{E}\kern-.125emX}}
\definecolor{darkred}{rgb}{0.7,0.1,0.1}
\definecolor{medred}{rgb}{0.5,0.1,0.1}
\definecolor{midred}{rgb}{0.7,0.2,0.2}
\definecolor{vdarkred}{rgb}{0.4,0.1,0.1}
\definecolor{darkslategray}{rgb}{0.18, 0.31, 0.31} %#2F4F4F
\definecolor{platinum}{rgb}{0.9, 0.89, 0.89} %#E5E4E2
\definecolor{gray}{rgb}{.4,.4,.4}
\definecolor{midgrey}{rgb}{0.5,0.5,0.5}
\definecolor{middarkgrey}{rgb}{0.35,0.35,0.35}
\definecolor{darkgrey}{rgb}{0.3,0.3,0.3}
\definecolor{darkred}{rgb}{0.7,0.1,0.1}
\definecolor{midblue}{rgb}{0.2,0.2,0.7}
\definecolor{darkblue}{rgb}{0.1,0.1,0.5}
\definecolor{darkgreen}{rgb}{0.1,0.5,0.1}
\definecolor{defseagreen}{cmyk}{0.69,0,0.50,0}
\definecolor{purple3}{RGB}{125,38,205}          % purple3
\definecolor{tyellow1}{HTML}{FCE94F}
\definecolor{tyellow2}{HTML}{EDD400}
\definecolor{tyellow3}{HTML}{C4A000}
\definecolor{torange1}{HTML}{FCAF3E}
\definecolor{torange2}{HTML}{F57900}
\definecolor{torange3}{HTML}{C35C00}
\definecolor{tbrown1}{HTML}{E9B96E}
\definecolor{tbrown2}{HTML}{C17D11}
\definecolor{tbrown3}{HTML}{8F5902}
\definecolor{tgreen1}{HTML}{8AE234}
\definecolor{tgreen2}{HTML}{73D216}
\definecolor{tgreen3}{HTML}{4E9A06}
\definecolor{tblue1}{HTML}{729FCF}
\definecolor{tblue2}{HTML}{3465A4}
\definecolor{tblue3}{HTML}{204A87}
\definecolor{tpurple1}{HTML}{AD7FA8}
\definecolor{tpurple2}{HTML}{75507B}
\definecolor{tpurple3}{HTML}{5C3566}
\definecolor{tred1}{HTML}{EF2929}
\definecolor{tred2}{HTML}{CC0000}
\definecolor{tred3}{HTML}{A40000}
\definecolor{tlgray1}{HTML}{EEEEEC}
\definecolor{tlgray2}{HTML}{D3D7CF}
\definecolor{tlgray3}{HTML}{BABDB6}
\definecolor{tdgray1}{HTML}{888A85}
\definecolor{tdgray2}{HTML}{555753}
\definecolor{tdgray3}{HTML}{2E3436}
\newcommand{\hlight}[1]{{\color{darkred}#1}}
\newcommand{\rhlight}[1]{\hlight{#1}}
\newcommand{\dghlight}[1]{{\color[RGB]{0,120,0}#1}}
\newcommand{\ncolor}[1]{{\color{tbrown3}#1}}
\newtheoremstyle{nthmstyle}% name
{3pt}% space above
{3pt}% space below
{}% body font
{}% indent amount
{\bfseries}% theorem head font
{.}% punctuation after theorem head
{.5em}% space after theorem head
{}% theorem head spec
\theoremstyle{nthmstyle}
\newtheorem{conjecture}{Conjecture}
\newtheorem{theorem}{Theorem}
\newtheorem{proposition}{Proposition}
\crefname{enumi}{}{}
\crefname{rstprop}{Proposition}{Propositions}
\newcommand{\fml}[1]{{\mathcal{#1}}}
\newcommand{\tn}[1]{\textnormal{#1}}
\newcommand{\msf}[1]{\ensuremath\mathsf{#1}}
\newcommand{\mbf}[1]{\ensuremath\mathbf{#1}}
\newcommand{\mbb}[1]{\ensuremath\mathbb{#1}}
\newcommand{\oper}[1]{\ensuremath\mathsf{#1}}
\newcommand{\waxp}{\ensuremath\mathsf{WAXp}}
\newcommand{\wcxp}{\ensuremath\mathsf{WCXp}}
\newcommand{\axp}{\ensuremath\mathsf{AXp}}
\newcommand{\cxp}{\ensuremath\mathsf{CXp}}
\newcommand{\aex}{\ensuremath\mathsf{AEx}}
\newcommand{\relevant}{\oper{Relevant}}
\newcommand{\irrelevant}{\oper{Irrelevant}}
\newcommand{\nfrac}{\nicefrac}
\newcommand{\exv}{\ensuremath\mathbf{E}}
\newcommand{\prob}{\ensuremath\mathbf{P}}
\newcommand{\cf}{\ensuremath\upsilon} % \nu, \xi
\newcommand{\cfn}[1]{\ensuremath\upsilon_{#1}} % \;\!\! \nu, \xi
\newcommand{\svn}[1]{\msf{Sc}_{#1}}
\newcommand{\sumd}{\Gamma}
\newcommand{\pnorm}[1]{\ensuremath{l}_{#1}}
\newcommand{\similar}{\ensuremath\sigma}
\newcommand{\tsimilar}{\ensuremath\mathsf{T}\sigma}
\newcommand{\pred}{\ensuremath\tn{P}}
\newcommand{\tm}{\scalebox{0.7125}[1.0]{\( - \)}}
\DeclareMathOperator*{\sv}{\msf{Sc}}
\DeclareMathOperator*{\argmax}{\tn{argmax}}
\definecolor{gray}{rgb}{.4,.4,.4}
\definecolor{midgrey}{rgb}{0.5,0.5,0.5}
\definecolor{middarkgrey}{rgb}{0.35,0.35,0.35}
\definecolor{darkgrey}{rgb}{0.3,0.3,0.3}
\definecolor{darkred}{rgb}{0.7,0.1,0.1}
\definecolor{midblue}{rgb}{0.2,0.2,0.7}
\definecolor{darkblue}{rgb}{0.1,0.1,0.5}
\definecolor{defseagreen}{cmyk}{0.69,0,0.50,0}
\newcommand{\jnote}[1]{\medskip\noindent$\llbracket$\textcolor{darkred}{joao}: \emph{\textcolor{middarkgrey}{#1}}$\rrbracket$\medskip}
\newcommand{\jnoteF}[1]{}
\newcounter{Comment}[Comment]
\DeclareMathOperator*{\limply}{\rightarrow}
\declaretheoremstyle[
  headfont=\bfseries,
  %bodyfont=\normalfont,
  bodyfont=\itshape,
  %numberwithin=section,
]{StdThmStyle}
\declaretheorem[name=Proposition,style=StdThmStyle]{rstprop}
\tikzset{
  0 my edge/.style={densely dashed, my edge},
  my edge/.style={-{Stealth[]}},
}
\setlist{nosep}
\newcommand\nparagraph{%
  \@startsection{paragraph}
    {4}
    {\z@}
%    {3.25ex \@plus1ex \@minus.2ex}
    {0.225ex \@plus0.225ex \@minus.125ex}
    {-1em}
    {\normalfont\normalsize\bfseries}%
}
\renewcommand{\paragraph}{\nparagraph}
\begin{document}

%\title{Axiomatic Aggregations Using Shapley Values}
%\title{Towards Corrected SHAP Scores}
%\title{Towards Trustworthy SHAP Scores}
\title{Towards trustable SHAP scores}

% The \author macro works with any number of authors. There are two commands
% used to separate the names and addresses of multiple authors: \And and \AND.
%
% Using \And between authors leaves it to LaTeX to determine where to break the
% lines. Using \AND forces a line break at that point. So, if LaTeX puts 3 of 4
% authors names on the first line, and the last on the second line, try using
% \AND instead of \And before the third author name.

\author {
  % Authors
  Olivier L\'{e}toff\'{e}\textsuperscript{\rm 1},
  Xuanxiang Huang\textsuperscript{\rm 2},
  Joao Marques-Silva\textsuperscript{\rm 3}
}
\affiliations {
    % Affiliations
    \textsuperscript{\rm 1}Univ.~Toulouse, France\\
    \textsuperscript{\rm 2}CNRS@CREATE, Singapore\\
    \textsuperscript{\rm 3}ICREA, Univ.~Lleida, Spain\\
    olivier.letoffe@orange.fr,
    xuanxiang.huang.cs@gmail.com,
    jpms@icrea.cat
}

\maketitle

\begin{abstract}
  SHAP scores represent the proposed use of the well-known Shapley
  values in eXplainable Artificial Intelligence (XAI).
  Recent work has shown that the exact computation of SHAP scores can
  produce unsatisfactory results. Concretely, for some ML models, SHAP
  scores will mislead with respect to relative feature influence.
  To address these limitations, recently proposed alternatives exploit
  different axiomatic aggregations, all of which are defined in terms
  of abductive explanations. However, the proposed axiomatic
  aggregations are not Shapley values.
  %, and consider solely classification problems
  %
  This paper investigates how SHAP scores can be modified so as to
  extend axiomatic aggregations to the case of Shapley values in XAI.
  More importantly, the proposed new definition of SHAP scores avoids
  all the known cases where unsatisfactory results have been
  identified.
  %In addition, the theoretical framework proposed in this paper is
  %shown to apply both to classification and regression problems.
  %
  The paper also characterizes the complexity of computing the
  novel definition of SHAP scores, highlighting families of
  classifiers for which computing these scores is tractable.
  Furthermore,
  the paper proposes modifications to the existing implementations of
  SHAP scores.
  These modifications eliminate some of the known limitations of SHAP
  scores, and have negligible impact in terms of performance.
\end{abstract}
%
\begin{comment}
  %
  Recent work uncovered examples of classifiers for which SHAP scores
  yield misleading feature attributions.
  %
  While such examples might be perceived as suggesting the inadequacy
  of Shapley values for explainability, this paper %argues otherwise and 
  shows that the source of the identified shortcomings of SHAP scores
  resides elsewhere.
  %
  Concretely, the paper makes the case that the failings of SHAP
  scores result from the characteristic functions used in earlier
  works. Furthermore, the paper identifies a number of properties that
  characteristic functions ought to respect, and proposes several
  novel characteristic functions, each exhibiting one or more of the
  desired properties.
  %
  More importantly, some of the characteristic functions proposed in
  this paper are guaranteed not to exhibit any of the shortcomings
  uncovered by earlier work.
  %
  The paper also investigates the impact of the new characteristic
  functions on the complexity of computing SHAP scores.
  %
  Finally, the paper proposes modifications to the tool SHAP to use
  instead one of our novel characteristic functions, thereby 
  eliminating some of the limitations reported for SHAP scores.
  %
\end{comment}

%

\section{Introduction} \label{sec:intro}

Shapley values for eXplainable AI (XAI), i.e.~SHAP
scores~\cite{lundberg-nips17}, are arguably among the most widely used  
explainability methods that target the attribution of (relative)
feature importance, as exemplified by the success of the tool
SHAP~\cite{molnar-bk23,mishra-bk23}.\footnote{See~\url{https://github.com/shap/shap}.}
Despite the massive popularity of SHAP scores, some works have
identified limitations with their
use~\cite{shrapnel-corr19,friedler-icml20,najmi-icml20,taly-cdmake20,nguyen-ieee-access21,procaccia-aaai21,sharma-aies21,guigue-icml21,taly-uai21,friedler-nips21,roder-mlwa22}.
However, most of these limitations can be attributed to the results 
obtained with existing tools, and not necessarily with the theoretical
foundations of SHAP scores.
More recent work~\cite{hms-corr23a,hms-ijar24} uncovered examples of
classifiers where \emph{exact} SHAP scores assign manifestly
unsatisfactory importance to features. Namely, features having no
influence in a prediction can be assigned more importance than
features having the most influence in the prediction.
This recent evidence should be perceived as far more problematic,
because it reveals apparent limitations with the theoretical
foundations of SHAP scores, and not with concrete implementations.
%
%Accordingly, these results might also be perceived as demonstrating
%the inadequacy of Shapley values for explainability.
%
Nevertheless, Shapley values are of fundamental importance, not only
in game theory~\cite{elkind-bk12}, but also in many other domains,
namely because of their intrinsic properties~\cite{shapley-ctg53}.
%axioms they satisfy~\cite{shapley-ctg53}. %widely considered to be
%
As a result, a natural question is whether the definitions of Shapley
values in XAI can be changed, so as to avoid situations where the
computed feature importance is problematic.

\paragraph{Contributions.}
%~\\
%
%In this paper, we argue
This paper argues that the key issue with SHAP scores is not the 
use of Shapley values in explainability per se, and shows that the
identified shortcomings of SHAP scores can be solely attributed to the 
characteristic functions used in earlier
works~\cite{kononenko-jmlr10,kononenko-kis14,lundberg-nips17,blockbaum-aistats20,najmi-icml20,barcelo-aaai21,vandenbroeck-aaai21,vandenbroeck-jair22,barcelo-jmlr23}. %blockbaum-aistats20,najmi-icml20,
As noted in the recent past~\cite{blockbaum-aistats20,najmi-icml20}, by
changing the characteristic function, one is able to produce different
sets of SHAP scores\footnote{%
Unfortunately, this paper argues that past alternative proposals of
characteristic functions~\cite{blockbaum-aistats20,najmi-icml20} also
exhibit critical limitations.}.
Motivated by these observations, the paper outlines fundamental
properties that characteristic functions ought to exhibit in the
context of XAI.
Furthermore, the paper proposes several novel characteristic
functions, which either respect some or all of the identified
properties.
In addition, the paper analyzes the impact of the novel characteristic
functions on the computational complexity of computing SHAP scores, by
building on recent work on the same
topic~\cite{vandenbroeck-aaai21,barcelo-aaai21,vandenbroeck-jair22,barcelo-jmlr23}.
An indirect consequence of our work is that \emph{corrected} SHAP
scores can be safely used for feature attribution in XAI, while
offering strong guarantees regarding known shortcomings.
%identified in recent work.

%%%%%%%%%%%%%%%%%%%%%%%%%%%%%%%%%%%%%%%%%%%%%%%%%%%%%%%%%%%%%%%%%%%%%%%%%%%%%%%%
%% Related work
%%%%%%%%%%%%%%%%%%%%%%%%%%%%%%%%%%%%%%%%%%%%%%%%%%%%%%%%%%%%%%%%%%%%%%%%%%%%%%%%
%%

%\section{Related Work}
\paragraph{Related work.}
%~\\
%
SHAP scores are ubiquitously used in
XAI~\cite{lundberg-nips17,molnar-bk23,mishra-bk23}.
Recent work argues that the existing definitions of (exact) SHAP
scores can yield unsatisfactory
results~\cite{hms-corr23a,hms-ijar24}. Motivated by these results,
different works proposed alternative solutions to the use of SHAP
scores~\cite{ignatiev-corr23a,izza-corr23,ignatiev-corr23b,izza-aaai24,ignatiev-sat24}.
Furthermore, one of these solutions~\cite{izza-aaai24} investigates
the use of power indices from the field of a priori voting
power~\cite{machover-hscv15} as a solution for feature importance in
XAI, covering several well-known power-indices.
However, Shapley values are also at the core of the well-known
Shapley-Shubik power index~\cite{shapley-apsr54}, one of the
best-known power indices, and which is not studied
in~\cite{izza-aaai24}. Thus, an open question is how to extend the
recent work on power indices for XAI~\cite{izza-aaai24} to the case of
the Shapley-Shubik index.

\jnoteF{Comments on~\cite{izza-aaai24}}
     % Related work

\paragraph{Organization.}
%~\\
%
%The paper is organized as follows.
%
The paper starts by introducing the notation and definitions used
throughout the paper.
Afterwards, the paper briefly dissects some of the recently reported
shortcomings with SHAP scores~\cite{msh-cacm24,hms-ijar24,hms-corr23a}.
Motivated by those shortcomings, the paper then proposes properties
that characteristic functions should exhibit.
The paper then proposes several novel characteristic functions, which
are shown to correct some or all of the shortcomings of the
characteristic functions used in earlier work.
Next, the paper studies the complexity of computing SHAP scores given
the novel characteristic functions proposed in this paper.
The paper also outlines a simple modification to the SHAP
tool~\cite{lundberg-nips17}, which corrects some of the shortcomings
of SHAP scores.

    % Introduction & relw

\section{Preliminaries} \label{sec:prelim}

\paragraph{Classification \& regression problems.}
%~\\
%
Let $\fml{F}=\{1,\ldots,m\}$ denote a set of features.
Each feature $i\in\fml{F}$ takes values from a domain $\mbb{D}_i$.
Domains can be categorical or ordinal. If ordinal, domains can be
discrete or real-valued.
Feature space is defined by
$\mbb{F}=\mbb{D}_1\times\mbb{D}_2\times\ldots\times\mbb{D}_m$. 
Throughout the paper domains are assumed to be discrete-valued.%
\footnote{
The results in the paper can be generalized to continuous-valued
features. For real-valued features, the only changes involve the
definition of expected value and probability, respectively
in~\eqref{eq:evdef} and~\eqref{eq:probdef}.}
The notation $\mbf{x}=(x_1,\ldots,x_m)$ denotes an arbitrary point in 
feature space, where each $x_i$ is a variable taking values from
$\mbb{D}_i$. Moreover, the notation $\mbf{v}=(v_1,\ldots,v_m)$
represents a specific point in feature space, where each $v_i$ is a
constant representing one concrete value from $\mbb{D}_i$.
A classifier maps each point in feature space to a class taken from
$\fml{K}=\{c_1,c_2,\ldots,c_K\}$. Classes can also be categorical or
ordinal. However, and unless otherwise stated, classes are assumed to
be ordinal.
In the case of regression, each point in feature space is mapped to an
ordinal value taken from a set $\mbb{C}$, e.g.\ $\mbb{C}$ could denote
$\mbb{Z}$ or $\mbb{R}$.
Therefore, a classifier $\fml{M}_{C}$ is characterized by a
non-constant \emph{classification function} $\kappa$ that maps feature
space $\mbb{F}$ into the set of classes $\fml{K}$,
i.e.\ $\kappa:\mbb{F}\to\fml{K}$.
A regression model $\fml{M}_R$ is characterized by a non-constant
\emph{regression function} $\rho$ that maps feature space $\mbb{F}$
into the set elements from $\mbb{C}$, i.e.\ $\rho:\mbb{F}\to\mbb{C}$. 
A classifier model $\fml{M}_{C}$ is represented by a tuple
$(\fml{F},\mbb{F},\fml{K},\kappa)$, whereas a regression model
$\fml{M}_{R}$ is represented by a tuple
$(\fml{F},\mbb{F},\mbb{C},\rho)$.%
\footnote{As shown in the extended version of this
paper~\cite{lhms-corr24}, other ML models can also be represented with
similar ideas. This is the case with the use of a softmax layer in
neural networks (NNs).}
When viable, we will represent an ML model $\fml{M}$ by a tuple
$(\fml{F},\mbb{F},\mbb{T},\tau)$, with $\tau:\mbb{F}\to\mbb{T}$,
without specifying whether whether $\fml{M}$ denotes a classification
or a regression model.
A \emph{sample} (or instance) denotes a pair $(\mbf{v},q)$, where
$\mbf{v}\in\mbb{F}$ and either $q\in\fml{K}$, with
$q=\kappa(\mbf{v})$, or $q\in\mbb{C}$, with $q=\rho(\mbf{v})$.

\paragraph{Additional notation.}
%
%\paragraph{Explanation problems.}
%~\\
%
An explanation problem is a tuple $\fml{E}=(\fml{M},(\mbf{v},q))$,
where $\fml{M}$ can either be a classification or a regression model,
and $(\mbf{v},q)$ is a target sample, with
$\mbf{v}\in\mbb{F}$.
(Observe that $q=\kappa(\mbf{v})$, with
$q\in\fml{K}$, in the case of a classification model, and  
$q=\rho(\mbf{v})$, with $q\in\mbb{C}$, in the case of a regression
model.)

%\paragraph{Additional notation.}
%~\\
%
Given $\mbf{x},\mbf{v}\in\mbb{F}$, and $\fml{S}\subseteq\fml{F}$, the
predicate $\mbf{x}_{\fml{S}}=\mbf{v}_{\fml{S}}$ is defined as follows:
\[
\mbf{x}_{\fml{S}}=\mbf{v}_{\fml{S}} ~~ := ~~ \left(\bigwedge\nolimits_{i\in\fml{S}}x_i=v_i\right)
\]
The set of points for which $\mbf{x}_{\fml{S}}=\mbf{v}_{\fml{S}}$ is
defined by
$\Upsilon(\fml{S};\mbf{v})=\{\mbf{x}\in\mbb{F}\,|\,\mbf{x}_{\fml{S}}=\mbf{v}_{\fml{S}}\}$.

\paragraph{Distributions, expected value.}
Throughout the paper, it is assumed a \emph{uniform probability
distribution} on each feature, and such that all features are
independent.
%
%Under \emph{uniform distributions}, the probability over an arbitrary
%point $\mbf{x}$ is defined as follow:
Thus, the probability of an arbitrary point in feature space
becomes:
%is defined as follows:
%
\begin{equation}
  %P(\mbf{x}) := \frac{1}{\Pi_{i\in\fml{F}}|\mbb{D}_i|}
  \prob(\mbf{x}) := \sfrac{1}{\Pi_{i\in\fml{F}}|\mbb{D}_i|}
\end{equation}
That is, every point in the feature space has the same probability.
The \emph{expected value} of an ML model $\tau:\mbb{F}\to\mbb{T}$
is denoted by $\mbf{E}[\tau]$. 
%For a complete data point $\mbf{v}$, we have
%$\mbf{E}[\kappa|_{\mbf{v}}] = \kappa(\mbf{v})$.
%
Furthermore, let
$\exv[\tau(\mbf{x})\,|\,\mbf{x}_{\fml{S}}=\mbf{v}_{\fml{S}}]$
represent the expected value of $\tau$ over points in feature space
consistent with the coordinates of $\mbf{v}$ dictated by $\fml{S}$,
which is defined as follows:
%
%$\mbf{E}[\kappa|_{\mbf{v}_{\fml{S}}}]$ denote the expected value of
%the boolean function $\kappa|_{\mbf{v}_{\fml{S}}}$, which is defined
%as follow:
%
\begin{equation} \label{eq:evdef}
  %\mbf{E}[\kappa|_{\mbf{v}_{\fml{S}}}]:=\sum\nolimits_{\mbf{x}\in\Upsilon(\fml{S};\mbf{v})}\kappa(\mbf{x})\cdot P(\mbf{x}|\mbf{v}_{\fml{S}})
  \exv[\tau(\mbf{x})\,|\,\mbf{x}_{\fml{S}}=\mbf{v}_{\fml{S}}]
  :=\sfrac{1}{|\Upsilon(\fml{S};\mbf{v})|}
  \sum\nolimits_{\mbf{x}\in\Upsilon(\fml{S};\mbf{v})}\tau(\mbf{x})
\end{equation}
%
%where $P(\mbf{x}|\mbf{v}_{\fml{S}})$ denotes the conditional
%probability of $\mbf{x}$ given that the features determined by
%$\fml{S}$ are fixed to the feature values dictated by $\mbf{v}$.
%
Similarly, we define,
\begin{equation} \label{eq:probdef}
  %\prob(\kappa(\mbf{x})=c\,|\,\mbf{x}_{\fml{S}}=\mbf{v}_{\fml{S}}):=
  %\sfrac{1}{|\Upsilon(\fml{S};\mbf{v})|}
  %\sum\nolimits_{\mbf{x}\in\Upsilon(\fml{S};\mbf{v})}\tn{ITE}(\kappa(\mbf{x})=c,1,0)
  \prob(\pi(\mbf{x})\,|\,\mbf{x}_{\fml{S}}=\mbf{v}_{\fml{S}}):=
  \sfrac{1}{|\Upsilon(\fml{S};\mbf{v})|}
  \sum\nolimits_{\mbf{x}\in\Upsilon(\fml{S};\mbf{v})}\tn{ITE}(\pi(\mbf{x}),1,0)
\end{equation}
where $\pi:\mbb{F}\to\{0,1\}$ is some predicate. %(or boolean function).
%(With respect to real-valued features, these would only impact the
%computation of $\prob$ and $\exv$; )

\jnoteF{
  Given $\mbf{z}\in\mbb{F}$ and$\fml{S}\subseteq\fml{F}$, let
  $\mbf{z}_{\fml{S}}$ represent the vector composed of the coordinates
  of $\mbf{z}$ dictated by $\fml{S}$.
  \[
  \exv{\kappa\,|\,\mbf{x}_{\fml{S}}=\mbf{v}_{\fml{S}}}
  :=\frac{1}{|\Upsilon(\fml{S};\mbf{v})|}
  \sum\nolimits_{\mbf{x}\in\Upsilon(\fml{S};\mbf{v})}\kappa(\mbf{x})
  \]
}

\paragraph{Shapley values.}
%~\\
%
Shapley values were proposed in the context of game theory in the
early 1950s by L.\ S.\ Shapley~\cite{shapley-ctg53}. Shapley values
were defined given some set $\fml{N}$, and a \emph{characteristic
function}, i.e.\ a real-valued function defined on the subsets of
$\fml{N}$, $\cf:2^{\fml{N}}\to\mbb{R}$.%
%such that $\cf(\emptyset)=0$.
\footnote{%
The original formulation also required super-additivity of the
characteristic function, but that condition has been relaxed in more
recent works~\cite{dubey-ijgt75,young-ijgt85}.}.
It is well-known that Shapley values represent the \emph{unique}
function that, given $\fml{N}$ and $\cf$, respects a number of
important axioms. More detail about Shapley values is available in
standard
references~\cite{shapley-ctg53,dubey-ijgt75,young-ijgt85,roth-bk88}.
Besides the recent uses in XAI, Shapley values have been used for
assigning measures of relative importance in 
computational social choice~\cite{elkind-bk12} (including a priori
voting power~\cite{shapley-apsr54,machover-bk98}), 
measurement of inconsistency in knowledge
bases~\cite{hunter-kr06,hunter-aij10},
and
intensity of attacks in argumentation
frameworks~\cite{amgoud-ijcai17}.

\paragraph{SHAP scores.}
%~\\
%
In the context of explainability, Shapley values are most often
referred to as SHAP scores%
%, i.e. the application of Shapley values in 
%explainability
~\cite{kononenko-jmlr10,kononenko-kis14,lundberg-nips17,barcelo-aaai21,barcelo-jmlr23},
and consider a specific characteristic function
$\cf_e:2^{\fml{F}}\to\mbb{R}$,
which is defined by,
\begin{equation} \label{eq:cfs}
  \cf_e(\fml{S};\fml{E}) :=
  \exv[\tau(\mbf{x})\,|\,\mbf{x}_{\fml{S}}=\mbf{v}_{\fml{S}}]
\end{equation}
%
%and where $\Upsilon$ (used in the definition of the expected value) is
%defined by~\eqref{eq:upsilon}.
%
%
Thus, given a set $\fml{S}$ of features,
$\cf_e(\fml{S};\fml{E})$ represents the \emph{e}xpected value
%average value
of the classifier over the points of feature space represented by
$\Upsilon(\fml{S};\mbf{v})$.
%~\footnote{A more simplified notation for 
%$\frac{1}{\Pi_{i\in\fml{F}\setminus\fml{S}}|\mbb{D}_i|}\sum\nolimits_{\mbf{x}\in\Upsilon(\fml{S};\mbf{v})}\kappa(\mbf{x})$
%is $\mbf{E}[\kappa|_{\mbf{v}_{\fml{S}}}]$}.
%
The formulation presented in earlier
work~\citep{barcelo-aaai21,barcelo-jmlr23} allows for different input
distributions when computing the average values. For the purposes of
this paper, it suffices to consider solely a uniform input
distribution, and so the dependency on the input distribution is not
accounted for.
Independently of the distribution considered, it should be clear that
in most cases $\cfn{e}(\emptyset)\not=0$; this is the case for example
with boolean classifiers~\cite{barcelo-aaai21,barcelo-jmlr23}.

To simplify the notation, the following definitions are used,
\begin{align}
  \Delta_i(\fml{S};\fml{E},\cf) & :=
  \left(\cf(\fml{S}\cup\{i\})-\cf(\fml{S})\right)
  \label{eq:def:delta}
  \\[2pt] %; \fml{M},\mbf{v}
  \varsigma(|\fml{S}|) & :=
  \sfrac{|\fml{S}|!(|\fml{F}|-|\fml{S}|-1)!}{|\fml{F}|!} %;\fml{M},\mbf{v}
  \label{eq:def:vsigma}
\end{align}
(Observe that $\Delta_i$ is parameterized on $\fml{E}$ and $\cf$.)

Finally, let $\svn{E}:\fml{F}\to\mbb{R}$, i.e.\ the SHAP score for
feature $i$, be defined by,\footnote{%
Throughout the paper, the definitions of $\Delta_i$ and $\sv$ are
explicitly associated with the characteristic function used in their
definition.}.
\begin{equation} \label{eq:sv}
  \svn{E}(i;\fml{E},\cfn{e}):=\sum\nolimits_{\fml{S}\subseteq(\fml{F}\setminus\{i\})}\varsigma(|\fml{S}|)\times\Delta_i(\fml{S};\fml{E},\cfn{e}) %;\fml{M},\mbf{v}
\end{equation}
Given a sample $(\mbf{v},q)$, the SHAP score assigned to each
feature measures the \emph{contribution} of that feature with respect
to the prediction. 
From earlier work, it is understood that a positive/negative value
indicates that the feature can contribute to changing the prediction,
whereas a value of 0 indicates no
contribution~\citep{kononenko-jmlr10}.
%

%\paragraph{SHAP scores.}
%~\\

\paragraph{Similarity predicate.}
%~\\
%
Given an ML model and some input $\mbf{x}$, the output of the ML model
is \emph{distinguishable} with respect to the sample $(\mbf{v},q)$ if the
observed change in the model's output is deemed sufficient; 
otherwise it is \emph{similar} (or indistinguishable).
This is represented by a \emph{similarity} predicate (which will also
be viewed as a boolean function) 
$\similar:\mbb{F}\to\{\bot,\top\}$ (where $\bot$ signifies
\emph{false}, and $\top$ signifies \emph{true}).%
\footnote{%
For simplicity, and with a minor abuse of notation, when $\similar$
is used in a scalar context, it is interpreted as a boolean function,
i.e.\ $\similar:\mbb{F}\to\{0,1\}$, with 0 replacing $\bot$ and 1 
replacing $\top$.}
Concretely,
$\similar(\mbf{x};\fml{E})$ holds true iff the change in the ML model
output is deemed \emph{insufficient} and so no observable difference
exists between the ML model's output for $\mbf{x}$ and $\mbf{v}$.% 
\footnote{
Throughout the paper, parameterization are shown after the separator
';', and will be elided when clear from the context.}
For regression problems, we write instead $\similar$ as the
instantiation of a template predicate,
i.e.\ $\similar(\mbf{x};\fml{E})=\tsimilar(\mbf{x};\fml{E},\delta)$,
where $\delta$ is an optional measure of output change, which can be
set to 0.%
\footnote{%
Exploiting a threshold to decide whether there exists an observable
change has been used in the context of adversarial
robustness~\cite{barrett-nips23}. Furthermore, the relationship
between adversarial examples and explanations is
well-known~\cite{inms-nips19,barrett-nips23}.}
%
%For regression problems, we represent relevant changes to the output
%by a parameter $\delta$.
Given a change in the input from $\mbf{v}$ to $\mbf{x}$, a change in
the output is indistinguishable (i.e.\ the outputs are similar) if,
\[
\similar(\mbf{x};\fml{E}) 
:= \tsimilar(\mbf{x};\fml{E},\delta)
:= [|\rho(\mbf{x})-\rho(\mbf{v})|\le\delta]
\]
otherwise, it is distinguishable.

For classification problems, similarity is defined to equate with not
changing the predicted class. Given a change in the input from 
$\mbf{v}$ to $\mbf{x}$, a change in the output is indistinguishable
(i.e.\ the outputs are similar) if,
\[
\similar(\mbf{x};\fml{E}):=[\kappa(\mbf{x})=\kappa(\mbf{v})]
\]
otherwise, it is distinguishable. (As shown in the remainder of this
paper, $\similar$ allows abstracting away whether the underlying
model implements classification or regression.)

%\paragraph{Properties of the similarity function.}
%~\\
%
It will be helpful to list a few properties of $\similar$.
Observe that
$\forall(\fml{A}\subseteq\mbb{F}).[\exv[\similar(\mbf{x};\fml{E})\,|\,\mbf{x}\in\fml{A}]\in[0,1]]$.
It is also plain to conclude that for
$\fml{A},\fml{B}\subseteq\mbb{F}$, with $\fml{A}\subseteq\fml{B}$, and
given $u\in\{0,1\}$, 
if $\exv[\similar(\mbf{x};\fml{E})\,|\,\mbf{x}\in\fml{B}]=u$,
then
$\exv[\similar(\mbf{x};\fml{E})\,|\,\mbf{x}\in\fml{A}]=u$,
A few more properties of $\similar$ are apparent. For
$\fml{A}\subseteq\mbb{F}$, $u\in\{0,1\}$,
$\left(\exv[\similar(\mbf{x};\fml{E})\,|\,\mbf{x}\in\fml{A}]=u\right)\leftrightarrow\forall(\mbf{x}\in\fml{A}).[\similar(\mbf{x};\fml{E})=u]$.
As a result, it is also the case that
$\left(\exv[\similar(\mbf{x};\fml{E})\,|\,\mbf{x}\in\fml{A}]<1\right)\leftrightarrow\exists(\mbf{x}\in\fml{A}).[\similar(\mbf{x};\fml{E})=0]$.

\paragraph{Adversarial examples.}
%~\\
%
Adversarial examples serve to reveal the brittleness of ML
models~\cite{szegedy-iclr14,szegedy-iclr15}. Adversarial robustness
indicates the absence of adversarial examples. The importance of
deciding adversarial robustness is illustrated by a wealth of
competiting alternatives~\cite{johnson-sttt23}.

Given a sample $(\mbf{v},q)$, and a norm $l_p$, a point
$\mbf{x}\in\mbb{F}$ is an \emph{adversarial example} if the prediction
for $\mbf{x}$ is distinguishable from that for $\mbf{v}$. Formally, we
write,
\[
\aex(\mbf{x};\fml{E}) ~~ := ~~
\left(||\mbf{x}-\mbf{v}||_{p}\le\epsilon\right)\land
\neg\similar(\mbf{x};\fml{E})
\]
where the $l_p$ distance between the given point $\mbf{v}$ and other
points of interest is restricted to $\epsilon>0$.
%
%
%Similarly,
Moreover, we define a \emph{constrained} adversarial example, such
that the allowed set of points is given by the predicate
$\mbf{x}_{\fml{S}}=\mbf{v}_{\fml{S}}$. Thus,
\[
\aex(\mbf{x},\fml{S};\fml{E}) ~~ := ~~
\left(||\mbf{x}-\mbf{v}||_{p}\le\epsilon\right)\land
\left(\mbf{x}_{\fml{S}}=\mbf{v}_{\fml{S}}\right)\land
\neg\similar(\mbf{x};\fml{E})
\]
Adversarial robustness is concerned with determining whether complex
ML models do not exhibit adversarial examples for chosen samples.

\paragraph{Abductive and contrastive explanations (AXps/CXps).}
%~\\
AXps and CXps are examples of formal
explanations for classification
problems~\cite{inms-aaai19,msi-aaai22,ms-rw22,darwiche-lics23}.
%kutyniok-jair21,
We adopt a generalization that encompasses regression
problems~\cite{ms-isola24}. (Although we define abductive/contrastive
explanations in terms of probabilities, these can be rewritten using
expected values. In addition, the proposed definitions are equivalent
to the logic-based formulations proposed in other
works~\cite{ms-rw22}.) %,msi-aaai22,ms-isola24

A weak abductive explanation (WAXp) denotes a set of features
$\fml{S}\subseteq\fml{F}$, such that for every point in feature space
the ML model output is similar to the given sample: $(\mbf{v},q)$.
The condition for a set of features to represent a WAXp (which also
defines a corresponding predicate $\waxp$) is as follows:
%We define a predicate $\waxp(\fml{S};\fml{E})$ that holds true iff
%$\fml{S}$ is a WAXp,
\[
\waxp(\fml{S};\fml{E}) ~~ := ~~
\prob(\similar(\mbf{x};\fml{E})\,|\,\mbf{x}_{\fml{S}}=\mbf{v}_{\fml{S}})
= 1
\]
%
%Alternatively, we write,
%\[
%\waxp(\fml{S}) ~~ := ~~
%\forall(\mbf{x}\in\mbb{F}).(\mbf{x}_{\fml{S}}=\mbf{v}_{\fml{S}})\limply\similar(\mbf{x};\fml{E},\delta)
%\]
%
Moreover, an AXp is a subset-minimal WAXp.

A weak contrastive explanation (WCXp) denotes a set of features
$\fml{S}\subseteq\fml{F}$, such that there exists some point in
feature space, where only the features in $\fml{S}$ are allowed to
change, that makes the ML model output distinguishable from the given
sample $(\mbf{v},q)$.
The condition for a set of features to represent a WCXp (which also
defines a corresponding predicate $\wcxp$) is as follows:
%We define a predicate $\wcxp(\fml{S};\fml{E})$ that holds true iff
%$\fml{S}$ is a WCXp,
%
\[
\wcxp(\fml{S};\fml{E}) ~~ := ~~
\prob(\similar(\mbf{x};\fml{E})\,|\,\mbf{x}_{\fml{F}\setminus\fml{S}}=\mbf{v}_{\fml{F}\setminus\fml{S}})
< 1
\]
%
%Alternatively, we write,
%\[
%\exists(\mbf{x}\in\mbb{F}).(\mbf{x}_{\fml{F}\setminus\fml{S}}=\mbf{v}_{\fml{F}\setminus\fml{S}})\land\neg\similar(\mbf{x};\fml{E},\delta)
%\]
%
Moreover, a CXp is a subset-minimal WCXp.

One immediate observation is that each WAXp is a hitting set (HS) of
the set of WCXps, and each WCXp is a HS of the set of WAXps.
Furthermore, one can prove that a set of features is an AXp iff it is
a minimal hitting set (MHS) of the set of CXps, and
vice-versa~\cite{msi-aaai22}.
(Although this result has been proved for classification problems, our
proposed framework generalizes the result also to regression problems.)

%An adversarial example is an assignment of values to features that
%causes the output of the ML model to be distinguished from some given
%sample; the features that change value (with respect to $\mbf{v}$) in
%an adversarial example represent a WCXp.

Given the previous definitions, it is plain the following result.
%\begin{proposition}
\begin{rstprop}
  $\exists(\mbf{x}\in\mbb{F}).\left(\aex(\mbf{x},\fml{F}\setminus\fml{S};\fml{E})\right)$
  iff
  $\wcxp(\fml{S};\fml{E})$, i.e.\ there exists a constrained
  adversarial example with the features $\fml{F}\setminus\fml{S}$ iff
  the set $\fml{S}$ is a weak CXp.
\end{rstprop}
%\end{proposition}

\paragraph{Feature (ir)relevancy.}
%~\\
%
The set of features that are included in at least one (abductive) 
explanation is defined as follows:
\begin{equation}
  \mathfrak{F}(\fml{E}):=\{i\in\fml{X}\,|\,\fml{X}\in2^{\fml{F}}\land\axp(\fml{X})\}
\end{equation}
where predicate $\axp(\fml{X})$ holds true iff $\fml{X}$ is an AXp.
(A well-known result is that $\mathfrak{F}(\fml{E})$ remains unchanged
if CXps are used instead of AXps~\cite{msi-aaai22}, in which case
predicate $\cxp(\fml{X})$ holds true iff $\fml{X}$ is a CXp.) %inams-aiia20
%
%
%The sets of AXps/CXps are defined as follows:
%%
%\begin{eqnarray}
%  \mbb{A}(\fml{E}) := \{\fml{X}\in2^{\fml{F}}\,|\,\axp(\fml{X})\} \\[1.0pt]
%  \mbb{C}(\fml{E}) := \{\fml{X}\in2^{\fml{F}}\,|\,\cxp(\fml{X})\} %\\
%\end{eqnarray}
%
%where $\axp(\fml{X})$ (resp.~$\cxp(\fml{X})$) holds true if $\fml{X}$
%is an AXp (resp.~CXp).
%
%
Finally, a feature $i\in\fml{F}$ is \emph{irrelevant}, i.e.\ predicate
$\irrelevant(i)$ holds true, if $i\not\in\mathfrak{F}(\fml{E})$;
otherwise feature $i$ is \emph{relevant}, and predicate $\relevant(i)$
holds true. 
Clearly, given some explanation problem $\fml{E}$,
$\forall(i\in\fml{F}).\irrelevant(i)\leftrightarrow\neg\relevant(i)$.
%~%
%\footnote{As noted earlier, the parameterization on $\fml{E}$ is elided.}.

%%\input{relw}     % Related work
   % Basic definitions
\section{Unsatisfactory SHAP Scores} \label{sec:issues}

Recent work~\cite{hms-corr23a,hms-ijar24} reports examples of
classifiers for which the SHAP scores are patently unsatisfactory.
This section shows that similar unsatisfactory scores can be obtained
with regression models. % msh-cacm24,hms-ijar24
%We use the running example to show that the similar situations occur
%with regression models.

\begin{figure*}[t]
  \begin{subfigure}[b]{0.35\linewidth} %\textwidth

    \begin{tabular}{c} ~~\\[5pt] \end{tabular}

    \centering
    \scalebox{0.875}{
      \renewcommand{\arraystretch}{1.15}
      \renewcommand{\tabcolsep}{0.575em}
      \renewcommand{\tabcolsep}{0.45em}
\begin{tabular}{cccc} \toprule
  row & $x_1$ & $x_2$ & $\rho(\mbf{x})$ 
  \\ \toprule
  1 & 0 & 0 & $\nfrac{-1}{2}$ \\
  2 & 0 & 1 & $\nfrac{3}{2}$ \\
  3 & 1 & 0 & 1 \\
  \tikzmarknode{a}{4} & 1 & 1               & \tikzmarknode{b}{1}
  \\ \bottomrule
  \begin{tikzpicture}[overlay,remember picture]
    \node[draw=midblue, thin, xshift=-0.35pt, yshift=-0.35pt, inner
      sep=2.0pt, fit=(a) (b)] {};
  \end{tikzpicture}
\end{tabular}

    }
    %
    %\begin{tabular}{c} ~~\\[-2pt] \end{tabular}
    %
    \caption{Tabular representation (TR)}
  \end{subfigure}
  \begin{subfigure}[b]{0.3\linewidth} %\textwidth
    \centering
    \scalebox{0.85}{
      % Concocted example
%%
%\tikzset{every label/.style={xshift=-0.35ex,
%  yshift=-5.25ex,
%  text width=1ex,
%  align=right, inner sep=1pt, font=\tiny, text=midblue}}
%%
%\tikzset{tlabel/.style={xshift=0.25ex, yshift=1.75ex, text width=1ex,
%    align=right, inner sep=1pt, font=\tiny, text=midblue}}
%%%\tikzset{every node/.style={---rectangle---}}
%
\forestset{
  BDT/.style={
    for tree={
      l=1.5cm,s sep=1.15cm,
      if n children=0{}{circle}, %rectangle
      %if n children=0{}{draw},
      draw=midblue,%draw=black,%
      text=midblue,%text=black,%
      edge={
        my edge
      },
      %if n=1{
      %  edge+={0 my edge},
      %}{},
      edge=thick,
    }
  },
}
\begin{forest}
  BDT
  [{$x_1$}, label={[yshift=-6.875ex]{{\tiny1}}} 
    [{$x_2$}, label={[yshift=-6.875ex]{{\tiny2}}}, %xshift=-3.075ex,yshift=-3.5ex
      edge label={node[midway,left,xshift=-1.5pt] {{\scriptsize$\in\{0\}$}}}
      [\ncolor{\boldmath{$\nfrac{-1}{2}$}}, label={[yshift=-5.75ex]{{\tiny4}}},
        edge label={node[midway,left,xshift=-0.5pt]
          {{\scriptsize$\in\{0\}$}}}, rectangle, fill={torange1!10} ]
      [\ncolor{\boldmath{$\nfrac{3}{2}$}}, label={[yshift=-5.75ex]{{\tiny5}}},
        edge label={node[midway,right,xshift=-0.575pt] {{\scriptsize$\in\{1\}$}}}, rectangle, fill={torange1!10} ]
    ]
    [\ncolor{\textbf{1}}, label={[yshift=-5.25ex]{{\tiny3}}},
      edge={very thick,draw=tblue2}, edge label={node[midway,right,xshift=0.5pt] {{\scriptsize$\in\{1\}$}}},
      rectangle, fill={torange1!10} ]
  ]
\end{forest}
    }
    \caption{Regression tree (RT)}
  \end{subfigure}
  \begin{subfigure}[b]{0.35\linewidth} %\textwidth

    \begin{tabular}{c} ~~\\[5pt] \end{tabular}

    \centering
    \scalebox{0.875}{
      \renewcommand{\arraystretch}{1.15}
      \renewcommand{\tabcolsep}{0.575em}
      \renewcommand{\tabcolsep}{0.45em}
\begin{tabular}{ccc} \toprule
  $\fml{S}$ & $\msf{rows}(\fml{S})$ & $\cfn{e}(\fml{S})$
  \\ \toprule
  $\emptyset$ & $1,2,3,4$ & $\nfrac{3}{4}$ \\
  $\{1\}$ & $3,4$ & $1$ \\
  $\{2\}$ & $2,4$ & $\nfrac{5}{4}$ \\
  $\{1,2\}$ & $4$ & $1$
  \\ \bottomrule
\end{tabular}

    }

    \begin{tabular}{c} ~~\\[-1.0pt] \end{tabular}

    \caption{Expected values} \label{ex:tr:rt:avg}
  \end{subfigure}
  \caption{Simple regression tree model, 
    adapted from~\cite[Fig.~8.1]{james-bk17}. 
    The target sample is $((1,1),1)$.}
  \label{ex:tr:rt}
\end{figure*}

\paragraph{Case study -- an example regression model.}
%~\\
%
\cref{ex:tr:rt} shows a regression tree (RT)~\cite{breiman-bk84} used
as the running example for the remainder of the paper. (The RT is
adapted from~\cite[Fig~8.1]{james-bk17}, with the attribute names
simplified, and with the predicted values changed.)
It is plain to conclude that $\fml{F}=\{1,2\}$,
$\mbb{D}_1=\mbb{D}_2=\{0,1\}$, $\mbb{F}=\{0,1\}^2$,
$\mbb{C}=\mbb{R}$,
%$\mbb{C}=\{1,1-6\alpha,1+2\alpha\}$,
and that the regression function is given either by the tabular
representation (TR) or by the regression table (RT).
% (albeit the RT only uses three values)
%
In addition, the target instance is $((1,1),1)$.
\cref{ex:tr:rt:avg} shows the values of $\cfn{e}$ for the possible
values of set $\fml{S}$.
Furthermore, in the remainder of the paper, it is assumed that the
value used to instantiate $\similar$ for this regression problem is
$\delta=0.5$.
%
%%\cref{ex:calcs}

\paragraph{Adversarial examples for the case study.}
%~\\
%
As can be observed in~\cref{ex:calcs:aex}, for an input to result in a
distinguishable output, feature 1 must be changed. In contrast,
feature 2 need not be changed. Any subset-minimal set of features that
must be changed to make the output distinguishable only includes
feature 1.

\paragraph{Explanations for the case study.}
%~\\
%
The computation of abductive explanations is summarized
in~\cref{ex:calcs:axp}.
As can be observed, the set of AXps contains    
only $\{1\}$. Hence, by minimal-hitting set duality, the set of CXps
is also $\{\{1\}\}$.
The conclusions are that: i) if feature 1 is fixed, then the
prediction cannot be changed; and ii) if the prediction is to be
changed, then feature 1 must be changed.
These conclusions are aligned with the conclusions obtained from
analyzing the adversarial examples.

\begin{figure*}[t]
  %\begin{minipage}{0.35\linewidth}
    \begin{subfigure}[c]{0.25\linewidth} %\textwidth
      \centering
      \scalebox{0.875}{
        \renewcommand{\arraystretch}{1.15}
        \renewcommand{\tabcolsep}{0.575em}
        \begin{tabular}{cccc} \toprule
  $x_1$ & $x_2$ & $\msf{AEx}(\mbf{x})$ & $||\cdot||_{0}$ \\
  \toprule
  0 & 0 & Y & 2 \\
  0 & 1 & Y & 1 \\
  1 & 0 & N & 1 \\
  1 & 1 & N & 0 \\
  \bottomrule
\end{tabular}

      }
      \caption{Adversarial examples}\label{ex:calcs:aex}
    \end{subfigure}
    %
    %\begin{tabular}{c} ~~\\[-1.0pt] \end{tabular}
    %
    \begin{subfigure}[c]{0.25\linewidth} %\textwidth
      %\begin{tabular}{c} ~~\\[0.0pt] \end{tabular}
      \centering
      \scalebox{0.875}{
        \renewcommand{\arraystretch}{1.15}
        \renewcommand{\tabcolsep}{0.575em}
        \begin{tabular}{ccc} \toprule
  $\fml{S}$ & $\waxp(\fml{S})$ & $\axp(\fml{S})$ \\
  \toprule
  $\emptyset$ & N & -- \\
  $\{1\}$ & Y & Y \\
  $\{2\}$ & N & -- \\
  $\{1,2\}$ & Y & N \\
  \bottomrule
\end{tabular}

      }
      \caption{Abductive explanations}\label{ex:calcs:axp}
    \end{subfigure}
  %\end{minipage}
  %
  %\begin{minipage}{0.65\linewidth}
    \begin{subfigure}[c]{0.5\linewidth} %\textwidth
      \centering
      \scalebox{0.875}{
        \renewcommand{\arraystretch}{1.15}
        \renewcommand{\tabcolsep}{0.5em}
        \begin{tabular}{cccccc} \toprule
  \multicolumn{6}{c}{$i=1$} \\
  \toprule
  $\fml{S}$ & $\cf_e(\fml{S})$ & $\cf_e(\fml{S}\cup\{1\})$ &
  $\Delta_1(\fml{S})$ & $\varsigma(\fml{S})$ &
  $\varsigma(\fml{S})\times\Delta_1(\fml{S})$ \\
  \toprule
  $\emptyset$ & $\nfrac{3}{4}$ & 1 & $\nfrac{1}{4}$ & $\nfrac{1}{2}$ &
  $\nfrac{1}{8}$ \\
  $\{2\}$ & $\nfrac{5}{4}$ & 1 & $\nfrac{-1}{4}$ & $\nfrac{1}{2}$ &
  $\nfrac{-1}{8}$ \\
  \midrule
  \multicolumn{5}{r}{$\svn{E}(1)~~=$} & \multicolumn{1}{c}{0} \\
  \toprule
  \multicolumn{6}{c}{$i=2$} \\
  \toprule
  $\fml{S}$ & $\cfn{e}(\fml{S})$ & $\cfn{e}(\fml{S}\cup\{2\})$ &
  $\Delta_2(\fml{S})$ & $\varsigma(\fml{S})$ &
  $\varsigma(\fml{S})\times\Delta_2(\fml{S})$ \\
  \toprule
  $\emptyset$ & $\nfrac{3}{4}$ & $\nfrac{5}{4}$ & $\nfrac{1}{2}$ & $\nfrac{1}{2}$
  & $\nfrac{1}{4}$ \\
  $\{1\}$ & 1 & 1 & 0 & $\nfrac{1}{2}$ & 0 \\
  \midrule
  \multicolumn{5}{r}{$\svn{E}(2)~~=$} & \multicolumn{1}{c}{$\nfrac{1}{4}$} \\
  \bottomrule
\end{tabular}

      }
      %
      %\begin{tabular}{c} ~~\\[0.0pt] \end{tabular}
      %
      \caption{SHAP scores}\label{ex:calcs:svs}
    \end{subfigure}
  %\end{minipage}
  %
  \caption{AExs, AXPs \& SHAP scores for the regression tree
    from~\cref{ex:tr:rt} and target sample $((1,1),1)$.
    For simplicity, parameterizations are elided.}
  \label{ex:calcs}
\end{figure*}

\paragraph{SHAP scores for the case study.}
%~\\
The computation of exact SHAP scores is
summarized~\cref{ex:calcs:svs}, using the definitions introduced
earlier in the paper.
As can be observed, the SHAP score for feature 1 is 0, and the SHAP
score for feature 2 is $\nfrac{1}{4}$. %$\alpha$.

%\cref{ex:calcs:aex}

%\paragraph{Discussion.}
\paragraph{Analysis.}
%~\\
%
For the running example it is clear that reporting a non-zero SHAP
score for feature 2 and a zero-value SHAP score for feature 1 is not
only non-intuitive, but it also disagrees with the information
provided both by the adversarial examples and the abductive
explanations.
%
%For different values of $\alpha$, e.g.\ $\alpha=\sfrac{1}{2}$ or
%$\alpha=\sfrac{1}{4}$, we obtain a different regression model, and the
%SHAP score of feature 2 changes. However, the SHAP score of feature 1
%is always 0.
%
%
Motivated by these situations where exact SHAP scores are manifestely
unsatisfactory, there has been recent work on proposing alternatives
to SHAP scores~\cite{izza-aaai24,ignatiev-sat24}.
These recent alternatives to SHAP scores are based on power indices,
studied for assessing voting power. However, there exists a power
index that is based on Shapley values, namely the Shapley-Shubik
index, which is not studied in~\cite{izza-aaai24}.
This paper investigates how to change the definition of SHAP scores
such that: i) the unsatisfactory results of existing SHAP scores are
eliminated; and ii) an instantiation of the Shapley-Shubik index is
obtained.
The envisioned approach is to maintain the definition of SHAP scores
in terms of Shapley values, but change the characteristic function
that has been widely used for defining SHAP scores, i.e.\ $\cfn{e}$.%
\footnote{In the Appendix, we show that other well-known
alternatives~\cite{najmi-icml20} also yield unsatisfactory results.}
The next section investigates several novel characteristic functions,
all of which are based on the similarity predicate introduced earlier
in the paper.
   % Issues with SVs

\section{Properties of Characteristic Functions} \label{sec:pcfs}

Given the issues reported earlier in the paper, this section
proposes properties that characteristic functions should 
respect.
If characteristic functions fail to respect some of these properties,
then the resulting SHAP scores can provide misleading information
about relative feature importance.

\paragraph{Weak value independence.}
Let $\fml{M}_1=(\fml{F},\mbb{F},\fml{T}_1,\tau_1)$ be an ML model,
with domain $\mbb{D}_i$ for each feature $i\in\fml{F}$. Moreover, let
$\fml{M}_2=(\fml{F},\mbb{F},\fml{T}_2,\tau_2)$ be another ML model,
with the same domains, and with $|\fml{T}_1|=|\fml{T}_2|$.
In addition, let $\mu:\fml{T}_1\to\fml{T}_2$ be a surjective mapping
from $\fml{T}_1$ to $\fml{T}_2$, such that for any
$\mbf{x}\in\mbb{F}$, $\tau_2(\mbf{x})=\mu(\tau_1(\mbf{x}))$.
Finally, let the target samples be $(\mbf{v},q)$, for $\fml{M}_1$,
and $(\mbf{v},\mu(q))$ for $\fml{M}_2$, thus defining the explanation 
problems $\fml{E}_1=(\fml{M}_1,(\mbf{v},q))$ and
$\fml{E}_2=(\fml{M}_2,(\mbf{v},\mu(q)))$.
A characteristic function $\cfn{t}$ is \emph{weakly value-independent}
if, given surjective $\mu$,
%\[
$\forall(i\in\fml{F}).[\svn{t}(i;\fml{E}_1)=\svn{t}(i;\fml{E}_2)]$
%\]

\paragraph{Strong value independence.}
Let $\fml{M}_1=(\fml{F},\mbb{F},\fml{T}_1,\tau_1)$ be an ML model,
with domain $\mbb{D}_i$ for each feature $i\in\fml{F}$. 
Moreover, let $\fml{M}_2=(\fml{F},\mbb{F},\fml{T}_2,\tau_2)$ be
another classifier, with the same domains.
%, and with $|\fml{K}_1|=|\fml{K}_2|$.
%
In addition, let $\mu:\fml{K}_1\to\fml{K}_2$ be a %surjective
mapping from $\fml{T}_1$ to $\fml{T}_2$, such that for
$q\in\fml{T}_1$, %$d=\mu(c)$,
and such that,
%\[
$\forall(b\in\fml{T}_1).[(b\not=q)\limply(\mu(b)\not=\mu(q))]$
%\]
%
%\forall(\mbf{x}\in\mbb{F}).
%for any $\mbf{x}\in\mbb{F}$, $\kappa_2(\mbf{x})=\mu(\kappa_1(\mbf{x}))$.
%
Finally, let the target samples be $(\mbf{v},q)$, for $\fml{M}_1$,
and $(\mbf{v},\mu(q))$ for $\fml{M}_2$, thus defining the explanation
problems $\fml{E}_1=(\fml{M}_1,(\mbf{v},q))$ and
$\fml{E}_2=(\fml{M}_2,(\mbf{v},\mu(q)))$.
A characteristic function $\cfn{t}$ is \emph{strongly
value-independent} if, given $\mu$,
%\[
$\forall(i\in\fml{F}).[\svn{t}(i;\fml{E}_1)=\svn{t}(i;\fml{E}_2)]$
%\]
%
Given the above, the following result holds%
\footnote{%
All the proofs are included in the Appendix.
%~\cref{app:appendix}
},

\begin{restatable}{rstprop}{PropStrongCIImpliesWeakCI}
  %\begin{proposition}
  If a characteristic function is strongly value-independent, then it
  is weakly value-independent.
  %\end{proposition}
\end{restatable}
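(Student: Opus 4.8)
The plan is to prove the implication by a class-containment argument at the level of the admissible value transformations $\mu$, rather than by manipulating the characteristic function $\cfn{t}$ directly. The observation driving the proof is that the family of maps $\mu$ quantified over in the weak definition is a \emph{subfamily} of the maps quantified over in the strong definition. Since strong value independence guarantees $\svn{t}(i;\fml{E}_1)=\svn{t}(i;\fml{E}_2)$ for every map in the larger family, it in particular guarantees the same equality for every map in the smaller family, which is exactly what weak value independence demands.

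Concretely, I would first fix an arbitrary $\mu$ satisfying the hypotheses of the weak definition: the output sets satisfy $|\fml{T}_1|=|\fml{T}_2|$, the map $\mu:\fml{T}_1\to\fml{T}_2$ is surjective, and $\tau_2=\mu\circ\tau_1$. Because feature domains are discrete-valued, the output sets $\fml{T}_1$ and $\fml{T}_2$ are finite, and a surjection between finite sets of equal cardinality is necessarily a bijection, hence injective.

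Next I would use injectivity of $\mu$ to verify the defining condition of the strong setting. For any $b\in\fml{T}_1$ with $b\not=q$, injectivity gives $\mu(b)\not=\mu(q)$, which is precisely the separation property $\forall(b\in\fml{T}_1).[(b\not=q)\limply(\mu(b)\not=\mu(q))]$ required in the strong definition; the relation $\tau_2=\mu\circ\tau_1$ and the target samples $(\mbf{v},q)$ and $(\mbf{v},\mu(q))$ are shared by both definitions, so all remaining hypotheses transfer verbatim. Thus every $\mu$ admissible for the weak definition is also admissible for the strong one. Invoking the assumed strong value independence of $\cfn{t}$ on this $\mu$ yields $\svn{t}(i;\fml{E}_1)=\svn{t}(i;\fml{E}_2)$ for all $i\in\fml{F}$, and since $\mu$ was an arbitrary weak-admissible map, $\cfn{t}$ is weakly value-independent.

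The step requiring the most care is the second one: establishing that surjectivity together with $|\fml{T}_1|=|\fml{T}_2|$ forces injectivity, which relies on finiteness of the output sets (guaranteed here by the discrete-domain assumption). For infinite codomains the implication would fail, and one would instead have to appeal directly to bijectivity of $\mu$. Everything else is bookkeeping, since no property of $\cfn{t}$ beyond its stipulated strong invariance is used.
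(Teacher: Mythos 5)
Your proof is correct and follows essentially the same route as the paper's: both arguments reduce the claim to the observation that every $\mu$ admissible under the weak definition is also admissible under the strong one, so the strong invariance hypothesis applies verbatim. You go further than the paper's one-line proof by actually justifying this containment (a surjection between finite sets of equal cardinality is a bijection, hence satisfies the separation condition $\forall(b\in\fml{T}_1).[(b\not=q)\limply(\mu(b)\not=\mu(q))]$), a detail the paper leaves implicit.
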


\paragraph{Compliance with feature (ir)relevancy.}
Characteristic functions should respect feature (ir)relevancy, i.e.\ a
feature is irrelevant iff its (corrected) SHAP score is 0. Formally,
a characteristic function $\cf_t$ is compliant with feature
(ir)relevancy if,
%then  this corresponds to, \label{en:p01} 
%
\begin{equation} \label{eq:compliance}
  \forall(i\in\fml{F}).\irrelevant(i)\leftrightarrow(\svn{t}(i)=0)
\end{equation}
In previous work~\cite{hms-corr23a,hms-ijar24,msh-cacm24}, SHAP scores
are said to be \emph{misleading} when compliance with feature
(ir)relevancy is not respected. In the remainder of the paper, we
assign the same meaning to the term \emph{misleading}.

\paragraph{Numerical neutrality.}
Existing definitions of SHAP scores are based on expected values and
so require $\fml{T}$ (i.e.\ $\fml{K}$ or $\mbb{C}$) to be ordinal.
However, classification problems often contemplate categorical
classes. A characteristic function respects numerical neutrality if it
can be used with both numerical and non-numerical $\mbb{T}$.

\paragraph{Discussion.}
The properties proposed in this section target the issues reported in
earlier work~\cite{msh-cacm24,hms-ijar24}, where SHAP scores mislead
with respect to relative feature importance. Additional properties
might be devised to address other hypothetical issues.

\jnoteF{Olivier proposed another issues with Sv's, related with
  changing the value of $\kappa(\mbf{v})$ and its non-impact in the
  relative order of feature importance.\\
  \textbf{Q:} Xuanxiang asks whether the relative order is with
  respect to plain or absolute values.
}

     % Desired properties of CFs

\section{New Characteristic Functions} \label{sec:ncfs}

This section proposes novel characteristic functions,%
\footnote{%
In the rest of the paper, the symbol $\cfn{t}$ will be used to denote
some concrete characteristic function distinguished by the letter
$t$. The SHAP scores obtained with such characteristic function
$\cfn{t}$ will be denoted by $\svn{T}$ (i.e.\ for scores we capitalize
the corresponding letter, and so the parameterization is
$\svn{T}(\fml{S};\fml{E},\cfn{t})$).
%Similarly, we will use $\Delta_{t}$.
%Other quantities will be denoted similarily, e.g.\ , $\varsigma_{t}$.
}
most of which respect all of the target properties identified in the
previous section.
%in~\cref{sec:pcfs}.
%
For each characteristic function $\cfn{t}$, and given a fixed
explanation problem $\fml{E}$, the obtained SHAP scores will be
unique. Some of these SHAP scores will not respect the properties
proposed earlier in the paper, e.g.\ this is the case with $\cfn{e}$,
whereas some of the novel SHAP scores will respect all of those
properties, in addition to the axioms proved by
Shapley~\cite{shapley-ctg53}.
Throughout this section, an explanation problem
$\fml{E}=(\fml{M},(\mbf{v},q))$ is assumed, and it is used to
parameterize the proposed characteristic functions.

\paragraph{Defining the new characteristic functions.}
Given the definition of the similarity predicate, we now introduce the 
following main new characteristic functions.
\begin{align}
  %\cfn{E}(\fml{S};\fml{E}) & \triangleq &
  %\exv{\kappa(\mbf{x})\,|\,\mbf{x}\in\Upsilon(\fml{S};\mbf{v})}
  %= \phi(\fml{S};\fml{E})
  %\\[3pt]
  %
  \cfn{s}(\fml{S};\fml{E}&) ~~ := ~~ %%\quad \triangleq  \quad
  \exv[\similar(\mbf{x})\,|\,\mbf{x}_{\fml{S}}=\mbf{v}_{\fml{S}}]
  %%\mbf{x}\in\Upsilon(\fml{S};\mbf{v})
  %%= \varphi(\fml{S};\fml{E})
  \label{eq:def:vs}
  \\[3pt]
  \cfn{a}(\fml{S};\fml{E}&) ~~ := ~~ %%\quad \triangleq \quad
  \left\{
  \begin{array}{lcl}
    1 & \quad & \tn{if
      $\cfn{s}(\fml{S};\fml{E})=1$%
      %$\exv{\similar(\mbf{x})\,|\,\mbf{x}\in\Upsilon(\fml{S};\mbf{v})}=1$
    }\\[2pt]
    0 & \quad & \tn{otherwise}\\
  \end{array}
  \right.
  \label{eq:def:va}
  \\[3pt]
  %
  %\cfn{c}(\fml{S};\fml{E}&) \quad \triangleq \quad
  %\left\{
  %\begin{array}{lcl}
  %  1 & \quad & \tn{if $\cfn{s}(\fml{S};\fml{E})<1$}\\[2pt] 
  %  0 & \quad & \tn{otherwise}\\
  %\end{array}
  %\right.
  %
  \cfn{c}(\fml{S};\fml{E}&) ~~ := ~~ %%\quad \triangleq \quad
  \left\{
  \begin{array}{lcl}
    1 & \quad & \tn{if
      $\cfn{s}(\fml{F}\setminus\fml{S};\fml{E})<1$
      %$\exv{\similar(\mbf{x})\,|\,\mbf{x}\in\Upsilon(\fml{F}\setminus\fml{S};\mbf{v})}<1$
    }\\[2pt]
    0 & \quad & \tn{otherwise}\\
  \end{array}
  \right.
  \label{eq:def:vc}
\end{align}
We will refer to characteristic functions $\cfn{e}$
(see~\eqref{eq:cfs}), $\cfn{s}$,
$\cfn{a}$, $\cfn{c}$, respectively as the \emph{expected value},
the \emph{similarity}, the WAXp-based, and the WCXp-based
characteristic functions.
% Mnemonic: CASE
%It should be noted that $\cfn{e}$ has been considered in earlier
%works~\cite{kononenko-jmlr10,kononenko-kis14,lundberg-nips17,vandenbroeck-aaai21,barcelo-aaai21,vandenbroeck-jair22,barcelo-jmlr23}.

Furthermore, we will introduce another characteristic function, which
is shown to be tightly related with $\cfn{a}$.
%%which will serve for proving a polynomial-time result.
%
\begin{equation}  \label{eq:def:vna}
  \cfn{n}(\fml{S};\fml{E}) ~~ := ~~ %%\quad \triangleq \quad
  \left\{
  \begin{array}{lcl}
    1 & \quad & \tn{if
      $\cfn{s}(\fml{S};\fml{E})<1$%
    }\\[2pt]
    0 & \quad & \tn{otherwise}\\
  \end{array}
  \right.
\end{equation}
(Observe that $\cf_n$ can be viewed as the complement of $\cf_a$.)

It is plain that for each characteristic $\cfn{t}$, with
$t\in\{a,c,n,s\}$, one can create a corresponding Shapley value 
$\svn{T}$, with $T\in\{A,C,N,S\}$. It suffices to replace the
characteristic function $\cfn{e}$ used in~\eqref{eq:sv}, by $\cfn{t}$,
for the chosen $t\in\{a,c,n,s\}$.

%Moreover, for the computation of SHAP scores using different
%characteristic functions, we will use an appropriate subscript to
%clarify which characteristic function is being used. Thus, given some
%characteristic function $\cf_t$, we will use $\Delta_{t}(\fml{S})$ or
%$\svn{t}(i)$.

%\paragraph{Characterizing the new characteristic functions.}
\paragraph{Basic attributes of the new characteristic functions.}
%~\\
We start by deriving some basic results regarding the characteristic
functions %$\cfn{s}$,
$\cfn{a}$, $\cfn{c}$ and $\cfn{n}$.
%proposed above in this section.
Throughout, it is assumed an explanation problem $\fml{E}$.

%%\begin{restatable}{rstprop}{CFsCFaCFcCFnResOne}
%%  %For $t\in\{s,a,c,n\}$,
%%  %$u\in\{0,1\}$ and $\fml{S}_p\subseteq\fml{S}_q$,
%%  %if $\cfn{t}(\fml{S}_p)=u$, then
%%  %$\cfn{t}(\fml{S}_q)=u$.
%%  %
%%  Let $u\in\{0,1\}$ and $\fml{S}_p\subseteq\fml{S}_q$.
%%  For $t\in\{s,a,n\}$,
%%  if $\cfn{t}(\fml{S}_p)=u$, then
%%  $\cfn{t}(\fml{S}_q)=u$.
%%  For $t\in\{c\}$,
%%  if $\cfn{t}(\fml{S}_q)=u$, then
%%  $\cfn{t}(\fml{S}_p)=u$.
%%\end{restatable}
%%
%%Furthermore, $\cfn{a}$, $\cfn{c}$ and $\cfn{n}$ exhibit a number of
%%important properties.

\begin{restatable}{rstprop}{CFaCFcCFnResOne}
  Given the definition of $\cfn{a}$, $\cfn{c}$ and $\cfn{n}$, then
  $\svn{A}(i;\fml{E},\cfn{a})\ge0$, $\svn{C}(i;\fml{E},\cfn{c})\ge0$,
  and $\svn{N}(i;\fml{E},\cfn{N})\le0$.
\end{restatable}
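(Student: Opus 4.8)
The plan is to show that each of these Shapley values inherits the sign of its marginal contributions $\Delta_i(\fml{S};\fml{E},\cfn{t})=\cfn{t}(\fml{S}\cup\{i\})-\cfn{t}(\fml{S})$, exploiting the fact that the coefficients $\varsigma(|\fml{S}|)$ from~\eqref{eq:def:vsigma} are non-negative. Since $\svn{T}(i)=\sum_{\fml{S}\subseteq\fml{F}\setminus\{i\}}\varsigma(|\fml{S}|)\,\Delta_i(\fml{S};\fml{E},\cfn{t})$ is a non-negatively weighted sum of marginals, it suffices to prove that $\Delta_i(\fml{S};\fml{E},\cfn{a})\ge0$, $\Delta_i(\fml{S};\fml{E},\cfn{c})\ge0$, and $\Delta_i(\fml{S};\fml{E},\cfn{n})\le0$ for every $\fml{S}\subseteq\fml{F}\setminus\{i\}$. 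Equivalently, I would establish that $\cfn{a}$ and $\cfn{c}$ are monotone non-decreasing, and $\cfn{n}$ is monotone non-increasing, with respect to set inclusion on feature subsets, since then each marginal for $i$ (comparing $\fml{S}$ with $\fml{S}\cup\{i\}$) automatically has the required sign.

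The first ingredient is the containment $\fml{S}\subseteq\fml{S}'\Rightarrow\Upsilon(\fml{S}';\mbf{v})\subseteq\Upsilon(\fml{S};\mbf{v})$, i.e.\ fixing more coordinates to $\mbf{v}$ can only shrink the set of consistent points. For $\cfn{a}$, suppose $\cfn{a}(\fml{S})=1$, i.e.\ $\cfn{s}(\fml{S})=1$. By the listed property of $\similar$ (if the conditional expectation over a set equals $u\in\{0,1\}$, then it equals $u$ over every subset), taking $u=1$ with the subset $\Upsilon(\fml{S}';\mbf{v})\subseteq\Upsilon(\fml{S};\mbf{v})$ yields $\cfn{s}(\fml{S}')=1$, hence $\cfn{a}(\fml{S}')=1$. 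Thus $\cfn{a}$ is non-decreasing, so $\Delta_i(\fml{S};\fml{E},\cfn{a})\ge0$ and therefore $\svn{A}(i)\ge0$.

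For $\cfn{c}$ the argument is analogous once the direction induced by the complement is tracked: $\fml{S}\subseteq\fml{S}'$ gives $\fml{F}\setminus\fml{S}'\subseteq\fml{F}\setminus\fml{S}$, hence $\Upsilon(\fml{F}\setminus\fml{S};\mbf{v})\subseteq\Upsilon(\fml{F}\setminus\fml{S}';\mbf{v})$. If $\cfn{c}(\fml{S})=1$, then $\cfn{s}(\fml{F}\setminus\fml{S})<1$, which by the stated equivalence $(\exv[\similar\,|\,\mbf{x}\in\fml{A}]<1)\leftrightarrow\exists(\mbf{x}\in\fml{A}).[\similar(\mbf{x})=0]$ means some point of $\Upsilon(\fml{F}\setminus\fml{S};\mbf{v})$ is dissimilar. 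That point lies in the larger set $\Upsilon(\fml{F}\setminus\fml{S}';\mbf{v})$ as well, so $\cfn{s}(\fml{F}\setminus\fml{S}')<1$ and $\cfn{c}(\fml{S}')=1$. Hence $\cfn{c}$ is non-decreasing, giving $\Delta_i(\fml{S};\fml{E},\cfn{c})\ge0$ and $\svn{C}(i)\ge0$. Finally, since $\cfn{n}(\fml{S})=1-\cfn{a}(\fml{S})$ for all $\fml{S}$, we get $\Delta_i(\fml{S};\fml{E},\cfn{n})=-\Delta_i(\fml{S};\fml{E},\cfn{a})\le0$, and so $\svn{N}(i)=-\svn{A}(i)\le0$ (the additive constant $1$ contributes zero to every Shapley value).

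I anticipate no genuine obstacle beyond bookkeeping. The only points that require care are getting the direction of the $\Upsilon$-inclusion correct in each case — it reverses under the complement used by $\cfn{c}$ — and invoking the appropriate $\similar$-property (the all-ones propagation for $\cfn{a}$ versus the existential $<1$ characterization for $\cfn{c}$). Both facts are already recorded among the listed properties of $\similar$, so the proof reduces to combining these monotonicity observations with the non-negativity of $\varsigma$.
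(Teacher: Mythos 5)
Your proof is correct, and for the $\cfn{a}$ case it coincides with the paper's: both establish that $\cfn{a}$ is monotone non-decreasing (you phrase this via the propagation of $\exv[\similar\,|\,\cdot]=1$ to subsets of $\Upsilon(\fml{S};\mbf{v})$; the paper phrases it as ``a superset of a WAXp is a WAXp''), and then conclude $\Delta_i(\fml{S};\fml{E},\cfn{a})\ge0$ and hence $\svn{A}(i)\ge0$ from the positivity of $\varsigma$. Where you diverge is in the treatment of $\cfn{c}$ and $\cfn{n}$: the paper dispatches both by citing \cref{prop:AeqCeqnN}, i.e.\ the equalities $\svn{C}(i)=\svn{A}(i)$ and $\svn{N}(i)=-\svn{A}(i)$, whereas you prove the sign of $\svn{C}$ directly by showing $\cfn{c}$ is itself monotone non-decreasing (correctly tracking the reversal of the $\Upsilon$-inclusion under complementation), and the sign of $\svn{N}$ from the pointwise identity $\cfn{n}=1-\cfn{a}$. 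Your route is more elementary and self-contained: it needs only monotonicity of each game, which is the canonical reason a Shapley value is sign-definite, and it avoids the forward dependency on \cref{prop:AeqCeqnN}, whose proof of $\svn{A}=\svn{C}$ requires the hitting-set duality between WAXps and WCXps together with the combinatorial symmetry $\varsigma(|\fml{S}|)=\varsigma(|(\fml{F}\setminus\{i\})\setminus\fml{S}|)$ --- far more machinery than the sign statement demands. The paper's route, on the other hand, is shorter on the page because \cref{prop:AeqCeqnN} is proved anyway for other purposes. Both arguments are sound; yours trades a citation for a few lines of direct reasoning and is arguably the cleaner proof of this particular proposition.
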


\begin{restatable}{rstprop}{CFaCFnResTwo}
  \label{prop:AvsNvsC}
  %Given an explanation problem $\fml{E}$, then t
  The following holds true:
  \begin{enumerate}[topsep=0pt,nosep]
  \item
    $\forall(\fml{S}\subseteq\fml{F}).[\cfn{a}(\fml{S};\fml{E})=1\leftrightarrow\waxp(\fml{S};\fml{E})]$.
  \item 
    $\forall(\fml{S}\subseteq\fml{F}).[\cfn{n}(\fml{S};\fml{E})=1\leftrightarrow\wcxp(\fml{F}\setminus\fml{S};\fml{E})]$.
  \item
    $\forall(\fml{S}\subseteq\fml{F}).[\cfn{c}(\fml{S};\fml{E})=1\leftrightarrow\wcxp(\fml{S};\fml{E})]$.
    %
%  \item $\forall(i\in\fml{F}).[\svn{a}(i) = -\svn{n}(i)]$.
  \end{enumerate}
\end{restatable}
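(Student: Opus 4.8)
The plan is to reduce all three equivalences to a single bridging identity: that the similarity characteristic function $\cfn{s}$ coincides with the conditional probability used in the definitions of $\waxp$ and $\wcxp$. The only substantive observation is that $\similar$, read in scalar context as a $\{0,1\}$-valued function, satisfies $\similar(\mbf{x})=\tn{ITE}(\similar(\mbf{x}),1,0)$ pointwise. Since the expected value in~\eqref{eq:evdef} and the conditional probability in~\eqref{eq:probdef} are defined by identical sums over $\Upsilon(\fml{S};\mbf{v})$ with the same normalizer, substituting $\tau:=\similar$ into~\eqref{eq:evdef} and comparing with~\eqref{eq:probdef} gives, for every $\fml{S}\subseteq\fml{F}$,
\[
\cfn{s}(\fml{S};\fml{E})
=\exv[\similar(\mbf{x})\,|\,\mbf{x}_{\fml{S}}=\mbf{v}_{\fml{S}}]
=\prob(\similar(\mbf{x};\fml{E})\,|\,\mbf{x}_{\fml{S}}=\mbf{v}_{\fml{S}}).
\]
Everything else is definitional unfolding.

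With this identity in hand, item~1 is immediate: by~\eqref{eq:def:va}, $\cfn{a}(\fml{S};\fml{E})=1$ iff $\cfn{s}(\fml{S};\fml{E})=1$, which by the identity holds iff $\prob(\similar(\mbf{x})\,|\,\mbf{x}_{\fml{S}}=\mbf{v}_{\fml{S}})=1$, i.e.\ iff $\waxp(\fml{S};\fml{E})$. Item~3 is the same argument shifted to the complementary set: by~\eqref{eq:def:vc}, $\cfn{c}(\fml{S};\fml{E})=1$ iff $\cfn{s}(\fml{F}\setminus\fml{S};\fml{E})<1$, and applying the identity with $\fml{F}\setminus\fml{S}$ in place of $\fml{S}$ rewrites this as $\prob(\similar(\mbf{x})\,|\,\mbf{x}_{\fml{F}\setminus\fml{S}}=\mbf{v}_{\fml{F}\setminus\fml{S}})<1$, which is exactly the defining condition of $\wcxp(\fml{S};\fml{E})$.

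Item~2 combines the same unfolding with one piece of index bookkeeping, which is the only mildly error-prone point of the whole argument. By~\eqref{eq:def:vna}, $\cfn{n}(\fml{S};\fml{E})=1$ iff $\cfn{s}(\fml{S};\fml{E})<1$, i.e.\ iff $\prob(\similar(\mbf{x})\,|\,\mbf{x}_{\fml{S}}=\mbf{v}_{\fml{S}})<1$. I would then instantiate the definition of $\wcxp$ at the set $\fml{F}\setminus\fml{S}$, whose conditioning set is $\fml{F}\setminus(\fml{F}\setminus\fml{S})$; since $\fml{S}\subseteq\fml{F}$, this double complement equals $\fml{S}$, so $\wcxp(\fml{F}\setminus\fml{S};\fml{E})$ unfolds to $\prob(\similar(\mbf{x})\,|\,\mbf{x}_{\fml{S}}=\mbf{v}_{\fml{S}})<1$, matching the previous line and closing the equivalence. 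As a final consistency check I would note that items~1 and~2 agree with the earlier remark that $\cfn{n}=1-\cfn{a}$: because $\cfn{s}\in[0,1]$, exactly one of $\cfn{s}(\fml{S})=1$ and $\cfn{s}(\fml{S})<1$ holds, and correspondingly $\neg\waxp(\fml{S};\fml{E})\leftrightarrow\wcxp(\fml{F}\setminus\fml{S};\fml{E})$.
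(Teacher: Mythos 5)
Your proof is correct and follows essentially the same route as the paper's: both reduce each item to the definitions of $\waxp$/$\wcxp$ by exploiting that, for the $\{0,1\}$-valued predicate $\similar$, the conditional expectation defining $\cfn{s}$ coincides with the conditional probability appearing in those definitions. The paper routes this bridge through the pointwise characterization from the preliminaries ($\cfn{s}(\fml{S};\fml{E})=1$ iff $\similar(\mbf{x};\fml{E})=1$ for all $\mbf{x}\in\Upsilon(\fml{S};\mbf{v})$, and $<1$ iff it fails for some point), whereas you state the bridge directly as an $\exv=\prob$ identity; the two are interchangeable, and the remainder of your argument, including the double-complement bookkeeping in item~2, matches the paper's.
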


\begin{restatable}{rstprop}{CFaCFcResOne}
  \label{prop:AeqCeqnN}
  %Given an explanation problem $\fml{E}$, then t
  The following holds true:
  \begin{enumerate}[nosep]
  %\item  $\forall(\fml{S}\subseteq\fml{F}).[\cfn{c}(\fml{S})=1\leftrightarrow\wcxp(\fml{S})]$.
    %
  \item $\forall(i\in\fml{F}).[\svn{A}(i;\fml{E},\cfn{a}) = -\svn{N}(i;\fml{E},\cfn{n})]$.
  \item $\forall(i\in\fml{F}).[\svn{A}(i;\fml{E},\cfn{a})=\svn{C}(i;\fml{E},\cfn{c})]$.
  \end{enumerate}
\end{restatable}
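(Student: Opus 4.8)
The plan is to handle the two identities separately: the first by a one-line cancellation, and the second by recognizing $\cfn{c}$ as the dual game of $\cfn{a}$ and invoking (in effect) the self-duality of the Shapley value.

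For the first identity I would start from the observation that $\cfn{n}$ is the pointwise complement of $\cfn{a}$. Since $\cfn{s}(\fml{S};\fml{E})\in[0,1]$ for every $\fml{S}\subseteq\fml{F}$ (a property of $\similar$ recorded in the preliminaries, as $\cfn{s}$ is a conditional expectation of $\similar$), exactly one of $\cfn{s}(\fml{S};\fml{E})=1$ and $\cfn{s}(\fml{S};\fml{E})<1$ holds; comparing \eqref{eq:def:va} and \eqref{eq:def:vna} then gives $\cfn{n}(\fml{S};\fml{E}) = 1 - \cfn{a}(\fml{S};\fml{E})$ for all $\fml{S}$. The additive constant cancels in every marginal contribution, so $\Delta_i(\fml{S};\fml{E},\cfn{n}) = -\Delta_i(\fml{S};\fml{E},\cfn{a})$, and summing the weighted marginals in \eqref{eq:sv} termwise yields $\svn{N}(i;\fml{E},\cfn{n}) = -\svn{A}(i;\fml{E},\cfn{a})$.

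For the second identity I would first rewrite $\cfn{c}$ in terms of $\cfn{a}$ by complementing the coalition. From \eqref{eq:def:vc} and \eqref{eq:def:vna}, together with the complementarity just established, $\cfn{c}(\fml{S};\fml{E}) = \cfn{n}(\fml{F}\setminus\fml{S};\fml{E}) = 1 - \cfn{a}(\fml{F}\setminus\fml{S};\fml{E})$. Computing $\svn{C}(i)$ directly, the marginal for $\fml{S}\subseteq\fml{F}\setminus\{i\}$ becomes $\cfn{c}(\fml{S}\cup\{i\}) - \cfn{c}(\fml{S}) = \cfn{a}(\fml{F}\setminus\fml{S}) - \cfn{a}(\fml{F}\setminus(\fml{S}\cup\{i\}))$. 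Setting $\fml{T} = \fml{F}\setminus(\fml{S}\cup\{i\})$ and using $i\notin\fml{S}$ so that $\fml{F}\setminus\fml{S} = \fml{T}\cup\{i\}$, this marginal equals $\cfn{a}(\fml{T}\cup\{i\}) - \cfn{a}(\fml{T}) = \Delta_i(\fml{T};\fml{E},\cfn{a})$. The plan is then to re-index the defining sum of $\svn{C}(i)$ by $\fml{T}$ instead of $\fml{S}$ and recover exactly the defining sum of $\svn{A}(i)$.

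The only real content, and hence the main (if modest) obstacle, is justifying this re-indexing. Two facts must be checked: that $\fml{S}\mapsto\fml{T}=(\fml{F}\setminus\{i\})\setminus\fml{S}$ is a bijection of $2^{\fml{F}\setminus\{i\}}$ onto itself sending a set of size $k$ to one of size $|\fml{F}|-1-k$, and that the Shapley weight is invariant under this size reflection, i.e.\ $\varsigma(k)=\varsigma(|\fml{F}|-1-k)$, which is immediate from the factorial symmetry in \eqref{eq:def:vsigma}. With both in hand, the weighted sum for $\svn{C}(i)$ is identical term-by-term to that for $\svn{A}(i)$, establishing $\svn{A}(i;\fml{E},\cfn{a}) = \svn{C}(i;\fml{E},\cfn{c})$. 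This is precisely the self-duality of the Shapley value specialized to the pair $(\cfn{a},\cfn{c})$, and verifying the weight symmetry together with the complementation bijection is where the little care is needed.
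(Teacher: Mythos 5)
Your proposal is correct and follows essentially the same route as the paper's proof: part 1 rests on the pointwise complementarity $\cfn{n}(\fml{S})=1-\cfn{a}(\fml{S})$ forcing $\Delta_i(\fml{S};\fml{E},\cfn{n})=-\Delta_i(\fml{S};\fml{E},\cfn{a})$, and part 2 rests on the coalition-complementation bijection $\fml{S}\mapsto(\fml{F}\setminus\{i\})\setminus\fml{S}$ together with the weight symmetry $\varsigma(k)=\varsigma(|\fml{F}|-1-k)$, exactly as in the paper. The only (cosmetic) difference is that your part 2 re-indexes the full sum algebraically via $\cfn{c}(\fml{S})=1-\cfn{a}(\fml{F}\setminus\fml{S})$, whereas the paper phrases the same correspondence through the WAXp/WCXp duality and restricts to the sets where the marginal equals 1.
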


An immediate consequence of the results in~\cref{prop:AeqCeqnN,prop:AvsNvsC},
is that the complexity of computing SHAP scores $\svn{T}$ is the same
for $T\in\{A,C,N\}$.

Furthermore, there are additional consequences
to~\cref{prop:AeqCeqnN}.
Observe that $\cfn{a}$ and $\cfn{c}$ are defined in terms of
predicates that are related by a hitting set relationship, i.e.\ as
noted earlier in the paper, each WAXp is a hitting set of the set of 
WCXps, and each WCXp is a hitting set of the set of WAXps. We call
such property \emph{hitting set duality}.

Thus, from~\cref{prop:AeqCeqnN}, we obtain a stronger result.
Consider two predicates $\pred_{\beta}$ and $\pred_{\delta}$, mapping
the powerset of $\fml{F}$, i.e. $2^{\fml{F}}$, to $\{\bot,\top\}$, and
such that $\pred_{\beta}$ and $\pred_{\delta}$ exhibit hitting set
duality. Define $\cfn{b}$ such that
$\cfn{b}(\fml{S})=1$ iff $\pred_{\beta}(\fml{S})$, and
$\cfn{d}(\fml{S})=1$ iff $\pred_{\delta}(\fml{S})$. From $\cfn{b}$ and
$\cfn{d}$, we obtain the Shapley values $\svn{B}$ and $\svn{D}$,
respectively.
As a result, %Then, %the following holds:
\begin{theorem} \label{thm:dual}
  Given the definitions of $\cfn{b}$ and $\cfn{d}$, it is the case
  that: $\forall(i\in\fml{F}).\svn{B}(i)=\svn{D}(i)$.
\end{theorem}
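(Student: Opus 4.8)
The plan is to reduce the statement to the self-duality of the Shapley value under set complementation, after first converting the abstract hitting-set duality between $\pred_\beta$ and $\pred_\delta$ into a pointwise identity relating $\cfn{b}$ and $\cfn{d}$. First I would make precise what hitting-set duality buys us at the level of the indicator functions. Both predicate families are upward closed (fixing more features preserves a WAXp, and allowing more features to change preserves a WCXp), so each is determined by its minimal elements. Hitting-set duality -- the statement that the minimal $\pred_\beta$-sets are exactly the minimal hitting sets of the minimal $\pred_\delta$-sets, and vice versa -- is the blocker relationship between these two upward-closed families, and its standard consequence, which I would prove as a lemma, is the complementation identity
\[
\pred_\delta(\fml{S}) ~\leftrightarrow~ \neg\,\pred_\beta(\fml{F}\setminus\fml{S}),
\quad\text{i.e.}\quad
\cfn{d}(\fml{S}) = 1 - \cfn{b}(\fml{F}\setminus\fml{S}) \text{ for every } \fml{S}\subseteq\fml{F}.
\]
The argument is that $\fml{S}$ hits every minimal $\pred_\beta$-set iff no minimal $\pred_\beta$-set sits inside $\fml{F}\setminus\fml{S}$ iff $\fml{F}\setminus\fml{S}$ fails $\pred_\beta$; this mirrors exactly the relation $\cfn{c}(\fml{S}) = 1 - \cfn{a}(\fml{F}\setminus\fml{S})$ already used implicitly in \cref{prop:AeqCeqnN}.

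Second, with the complementation identity in hand, I would carry out the self-duality computation directly on \eqref{eq:sv}. Fix a feature $i$ and a set $\fml{S}\subseteq\fml{F}\setminus\{i\}$, and set $\fml{T} := (\fml{F}\setminus\{i\})\setminus\fml{S}$. A short rewrite, using $i\notin\fml{S}$ so that $\fml{F}\setminus\fml{S} = \fml{T}\cup\{i\}$, gives
\[
\Delta_i(\fml{S};\fml{E},\cfn{d})
= \cfn{b}(\fml{F}\setminus\fml{S}) - \cfn{b}\big((\fml{F}\setminus\fml{S})\setminus\{i\}\big)
= \cfn{b}(\fml{T}\cup\{i\}) - \cfn{b}(\fml{T})
= \Delta_i(\fml{T};\fml{E},\cfn{b}).
\]
The map $\fml{S}\mapsto\fml{T}$ is a bijection of the subsets of $\fml{F}\setminus\{i\}$ onto themselves, and a one-line factorial check shows $\varsigma(|\fml{S}|) = \varsigma(|\fml{T}|)$: since $|\fml{S}| = (|\fml{F}|-1)-|\fml{T}|$, the two factorials in \eqref{eq:def:vsigma} simply swap. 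Summing over $\fml{S}$ and reindexing by $\fml{T}$ then yields $\svn{D}(i) = \svn{B}(i)$, as required.

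I expect the main obstacle to be the first step rather than the second: the Shapley computation is a mechanical reindexing, whereas justifying that the informal ``hitting set duality'' indeed delivers the clean identity $\cfn{d}(\fml{S}) = 1 - \cfn{b}(\fml{F}\setminus\fml{S})$ requires committing to the blocker (minimal-hitting-set) reading of the duality and checking upward-closedness together with the boundary cases $\cfn{b}(\fml{F}) = 1$ and $\cfn{d}(\emptyset) = 0$ that make the blocker involution behave correctly. Once the identity is secured, the conclusion is immediate, and it in fact subsumes \cref{prop:AeqCeqnN}, part~(2), as the special case $\pred_\beta = \waxp$ and $\pred_\delta = \wcxp$.
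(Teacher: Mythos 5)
Your proposal is correct and follows essentially the same route as the paper, which establishes the special case in \cref{prop:AeqCeqnN}(2) via the complementation identity $\wcxp(\fml{F}\setminus\fml{S};\fml{E})\leftrightarrow\neg\waxp(\fml{S};\fml{E})$, the involution $\fml{S}\mapsto(\fml{F}\setminus\{i\})\setminus\fml{S}$ on subsets of $\fml{F}\setminus\{i\}$, and the symmetry $\varsigma(|\fml{S}|)=\varsigma(|(\fml{F}\setminus\{i\})\setminus\fml{S}|)$, and then asserts the theorem as the generalization of that argument. Your version is marginally cleaner in that the direct identity $\Delta_i(\fml{S};\fml{E},\cfn{d})=\Delta_i(\fml{T};\fml{E},\cfn{b})$ dispenses with the paper's restriction to the sets where $\Delta_i=1$, and you rightly identify that the only real work is pinning down the blocker reading of ``hitting set duality'' so that $\cfn{d}(\fml{S})=1-\cfn{b}(\fml{F}\setminus\fml{S})$ holds, a point the paper leaves implicit.
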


It is interesting to observe that \cref{thm:dual} is a new result that
finds application beyond XAI, in any practical uses of Shapley values,
including those mentioned earlier in the paper.
%including computational social choice~\cite{elkind-bk12} 
%(especially in a priori voting power~\cite{shapley-apsr54,machover-bk98}),
%measurement of inconsistency in knowledge
%bases~\cite{hunter-kr06,hunter-aij10},
%and
%intensity of attacks in argumentation
%frameworks~\cite{amgoud-ijcai17}.

%\begin{restatable}{rstprop}{PolyEquivSc}
%  The complexity of computing SHAP scores given
%  $\cfn{a},\cfn{c},\cfn{n}$ is the same.
%\end{restatable}

\paragraph{Properties of the new characteristic functions.}
%~\\
We now assess which of the properties of characteristic functions
proposed earlier %listed in~\cref{sec:pcfs}
are respected by which characteristic functions among those proposed
in this section.

It is plain that characteristic functions based on the similarity
predicate respect numerical neutrality.
Furthermore, another general result is that characteristic functions
based on the similarity predicate guarantee strong (and so weak) value
independence.

\begin{restatable}{rstprop}{StrongCIRes}
  For $t\in\{s,a,c,n\}$ and $i\in\fml{F}$, it is the case that the
  characteristic function $\cfn{t}$ 
  %resulting $\svn{t}(i)$ SHAP scores 
  respects strong value independence.
\end{restatable}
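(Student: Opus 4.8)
The plan is to reduce the whole statement to a single pointwise identity between the two similarity predicates and then let it propagate mechanically through the definitions of the four characteristic functions and of the Shapley formula. Concretely, the only substantive step I would isolate is the claim that $\similar(\mbf{x};\fml{E}_1)=\similar(\mbf{x};\fml{E}_2)$ holds for every $\mbf{x}\in\mbb{F}$. Once this is established, strong value independence for all of $\cfn{s},\cfn{a},\cfn{c},\cfn{n}$ follows with no further model-specific reasoning.

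To prove the claim I would unfold the classification similarity predicate. Since $\fml{M}_1$ and $\fml{M}_2$ are classifiers related by $\tau_2=\mu\circ\tau_1$, with target samples $(\mbf{v},q)$ and $(\mbf{v},\mu(q))$, we have $\similar(\mbf{x};\fml{E}_1)=1 \Leftrightarrow \tau_1(\mbf{x})=q$, and $\similar(\mbf{x};\fml{E}_2)=1 \Leftrightarrow \tau_2(\mbf{x})=\mu(q) \Leftrightarrow \mu(\tau_1(\mbf{x}))=\mu(q)$. The defining hypothesis on $\mu$, namely $\forall b.[(b\ne q)\Rightarrow(\mu(b)\ne\mu(q))]$, says precisely that $q$ is the unique preimage of $\mu(q)$; its contrapositive with $b=\tau_1(\mbf{x})$ yields $\mu(\tau_1(\mbf{x}))=\mu(q)\Rightarrow\tau_1(\mbf{x})=q$, and the reverse implication is immediate. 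Hence $\mu(\tau_1(\mbf{x}))=\mu(q)\Leftrightarrow\tau_1(\mbf{x})=q$, so $\similar(\mbf{x};\fml{E}_2)=1 \Leftrightarrow \similar(\mbf{x};\fml{E}_1)=1$, and the two boolean-valued predicates coincide everywhere.

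With the claim in hand I would close by unwinding the definitions. Because $\cfn{s}(\fml{S};\fml{E})$ is by~\eqref{eq:def:vs} the average of $\similar$ over $\Upsilon(\fml{S};\mbf{v})$, the pointwise identity gives $\cfn{s}(\fml{S};\fml{E}_1)=\cfn{s}(\fml{S};\fml{E}_2)$ for every $\fml{S}\subseteq\fml{F}$. The remaining functions are mere threshold tests on values of $\cfn{s}$: by~\eqref{eq:def:va}, \eqref{eq:def:vna} and~\eqref{eq:def:vc}, the quantities $\cfn{a}(\fml{S};\fml{E})$ and $\cfn{n}(\fml{S};\fml{E})$ depend only on $\cfn{s}(\fml{S};\fml{E})$, while $\cfn{c}(\fml{S};\fml{E})$ depends only on $\cfn{s}(\fml{F}\setminus\fml{S};\fml{E})$, so each of $\cfn{a},\cfn{c},\cfn{n}$ also agrees on $\fml{E}_1$ and $\fml{E}_2$ for every $\fml{S}$. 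Finally, writing $T$ for the capitalized counterpart of $t$, the score $\svn{T}(i;\fml{E},\cfn{t})$ is by~\eqref{eq:def:delta}, \eqref{eq:def:vsigma} and~\eqref{eq:sv} a weighted sum, with $\fml{E}$-independent weights $\varsigma(|\fml{S}|)$, of the differences $\cfn{t}(\fml{S}\cup\{i\})-\cfn{t}(\fml{S})$ over $\fml{S}\subseteq\fml{F}\setminus\{i\}$; since each such value coincides between $\fml{E}_1$ and $\fml{E}_2$, we get $\svn{T}(i;\fml{E}_1)=\svn{T}(i;\fml{E}_2)$ for all $i\in\fml{F}$, which is exactly strong value independence.

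I do not anticipate a real obstacle; the one point demanding care is the correct use of the hypothesis on $\mu$. It is tempting to assume $\mu$ injective, but the definition only guarantees injectivity \emph{at} $q$ (no value other than $q$ shares the image $\mu(q)$), and this one-sided condition is exactly what the biconditional $\mu(\tau_1(\mbf{x}))=\mu(q)\Leftrightarrow\tau_1(\mbf{x})=q$ needs: collapses of $\mu$ among values different from $q$ are harmless because $\similar$ only distinguishes the class $q$ from its complement. I would also note, as a sanity check consistent with the numerical-neutrality remark preceding the proposition, that no ordinality of $\fml{T}_1$ or $\fml{T}_2$ is used anywhere in the argument.
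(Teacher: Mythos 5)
Your proof is correct and follows essentially the same route as the paper's: the key step in both is that the hypothesis on $\mu$ forces the similarity predicates of $\fml{E}_1$ and $\fml{E}_2$ to coincide pointwise, after which every similarity-based characteristic function and hence every score $\svn{T}$ is unchanged. Your version merely spells out the details the paper leaves implicit (the one-sided injectivity of $\mu$ at $q$, and the propagation through \eqref{eq:def:vs}--\eqref{eq:def:vna} and \eqref{eq:sv}).
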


\begin{restatable}{rstprop}{RelevancyComplianceRes} %CFaCFcCFnResTwo}
  \label{rstprop:RCR}
  For $T\in\{A,C,N\}$, then it is the case that
  %\[%\begin{equation}
  $\forall(i\in\fml{F}).\irrelevant(i)\leftrightarrow(\svn{T}(i)=0)$.
  %\]%\end{equation}
  %%Thus, $\svn{t}$ does not mislead.
\end{restatable}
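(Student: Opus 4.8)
The plan is to reduce the three cases to a single one and then relate the Shapley value $\svn{A}(i)$ to the structure of weak abductive explanations. By \cref{prop:AeqCeqnN} we have $\svn{A}(i)=\svn{C}(i)=-\svn{N}(i)$ for every $i\in\fml{F}$, so the three scores vanish simultaneously; hence it suffices to establish $\irrelevant(i)\leftrightarrow(\svn{A}(i)=0)$ for $T=A$.

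First I would record the monotonicity of $\cfn{a}$. By \cref{prop:AvsNvsC}, $\cfn{a}(\fml{S})=1$ iff $\waxp(\fml{S})$, and the property of $\similar$ stated in the preliminaries (shrinking the admissible region preserves an expectation equal to $1$) yields that $\fml{S}\subseteq\fml{S}'$ implies $\cfn{a}(\fml{S})\le\cfn{a}(\fml{S}')$. Consequently each increment $\Delta_i(\fml{S};\fml{E},\cfn{a})=\cfn{a}(\fml{S}\cup\{i\})-\cfn{a}(\fml{S})$ lies in $\{0,1\}$, and since every coefficient $\varsigma(|\fml{S}|)$ is strictly positive, $\svn{A}(i)=\sum_{\fml{S}\subseteq\fml{F}\setminus\{i\}}\varsigma(|\fml{S}|)\,\Delta_i(\fml{S};\fml{E},\cfn{a})$ is a nonnegative sum of such terms. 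Therefore $\svn{A}(i)=0$ iff $\Delta_i(\fml{S};\fml{E},\cfn{a})=0$ for every $\fml{S}\subseteq\fml{F}\setminus\{i\}$, i.e.\ iff there is no ``critical'' set $\fml{S}$ not containing $i$ for which $\fml{S}\cup\{i\}$ is a WAXp but $\fml{S}$ is not.

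The core of the argument is to show that such a critical set exists exactly when $i$ is relevant. For the forward direction, if $i$ is relevant then $i$ belongs to some AXp $\fml{X}$; taking $\fml{S}=\fml{X}\setminus\{i\}$, subset-minimality of $\fml{X}$ forces $\fml{S}$ not to be a WAXp while $\fml{S}\cup\{i\}=\fml{X}$ is, so $\Delta_i(\fml{S};\fml{E},\cfn{a})=1$ and $\svn{A}(i)>0$. For the converse, suppose $\Delta_i(\fml{S};\fml{E},\cfn{a})=1$ for some $\fml{S}\not\ni i$, so $\fml{S}\cup\{i\}$ is a WAXp. Shrinking $\fml{S}\cup\{i\}$ to a subset-minimal WAXp $\fml{X}$, I claim $i\in\fml{X}$: otherwise $\fml{X}\subseteq\fml{S}$ would be a WAXp, and upward monotonicity of $\cfn{a}$ would make $\fml{S}$ a WAXp too, contradicting $\cfn{a}(\fml{S})=0$. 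Hence $\fml{X}$ is an AXp containing $i$, so $i\in\mathfrak{F}(\fml{E})$ and $i$ is relevant.

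The hard part is this last implication, namely turning a single nonzero marginal contribution into an actual AXp through $i$, which is precisely where monotonicity of $\cfn{a}$ combined with the minimization step is needed; everything else is bookkeeping over the strict positivity of the Shapley coefficients and the reduction via \cref{prop:AeqCeqnN}.
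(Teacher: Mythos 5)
Your proof is correct and follows essentially the same route as the paper's: reduce $C$ and $N$ to $A$ via \cref{prop:AeqCeqnN}, use monotonicity of $\cfn{a}$ to restrict the increments to $\{0,1\}$, and characterize $\svn{A}(i)>0$ by the existence of a set $\fml{S}$ with $\cfn{a}(\fml{S})=0$ and $\cfn{a}(\fml{S}\cup\{i\})=1$. The only difference is that where the paper cites prior work for the equivalence between such a ``critical'' set and membership of $i$ in some AXp, you prove it directly by extracting a subset-minimal WAXp from $\fml{S}\cup\{i\}$ and arguing it must contain $i$, which makes your argument self-contained.
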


%%\begin{restatable}{rstprop}{RelevancyComplianceRes}
%%  For $t\in\{a,c,n\}$ and $i\in\fml{F}$, it is the case that
%%  $\svn{t}(i)=0$ iff $\irrelevant(i)$, i.e. $\svn{t}$ does not mislead.
%%\end{restatable}

Finally, we observe that $\cfn{s}$ represents a boolean classifier,
and so it  exhibits the issues with SHAP scores uncovered for boolean
classifiers based on $\cfn{e}$ in earlier
work~\cite{hms-corr23a,hms-ijar24}.

     % New CFs & their properties

\section{Complexity of Computing SHAP Scores} \label{sec:cplsv}

The previous section introduced novel characteristic functions that
exhibit a number of desirable properties, which in turn ensure that
SHAP scores will not produce misleading information
(see~\cref{rstprop:RCR}).
%
%However, it is also necessary to assess whether the desirable
%properties of the characteristic functions impact the complexity of
%computing SHAP scores. This is the subject of this section.
%
Another related question is how the novel characteristic functions
impact the computional complexity of computing SHAP scores. This
section starts the effort of mapping such computional complexity.
%
%Building on those results, this section presents initial results of
%the impact of the novel characteristic functions proposed in the
%previous section on the computational complexity of computing SHAP
%scores. 
%
%%However, it is also necessary to assess whether the desirable
%%properties of the characteristic functions impact the complexity of
%%computing SHAP scores. This is the subject of this section.

\paragraph{Intractable cases.}
%~\\
A number of intractability results have been obtained in recent
years~\cite{vandenbroeck-aaai21,vandenbroeck-jair22}.
As noted earlier in the paper, for boolean functions, the similarity
predicate does not provide any difference with respect to the original
classifier. The following result is clear.

\begin{restatable}{rstprop}{BoolClfZeta}
  \label{eqzk}
  For a boolean classifier, with $\kappa(\mbf{v})=1$, then
  $\forall(\mbf{x}\in\mbb{F}).\left(\similar(\mbf{x};\fml{E})=\kappa(\mbf{x})\right)$.
\end{restatable}

From~\cref{eqzk} and Corollary~8 in~\cite{vandenbroeck-jair22}, it is
immediate that,

\begin{restatable}{rstprop}{CplCFsCnfDnfZeta}
  \label{prop:cnfdnf}
  Computing SHAP scores $\svn{S}$ is \#P-hard for boolean classifiers
  in CNF or DNF.
\end{restatable}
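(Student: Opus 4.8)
The plan is to show that, on boolean classifiers in CNF or DNF, computing $\svn{S}$ is exactly as hard as computing the expected-value scores $\svn{E}$, whose \#P-hardness on these families is already known. First I would invoke~\cref{eqzk}: for a boolean classifier with $\kappa(\mbf{v})=1$ we have $\similar(\mbf{x};\fml{E})=\kappa(\mbf{x})$ at every point $\mbf{x}\in\mbb{F}$. Since the target function $\tau$ is here the classification function $\kappa$, substituting this identity into the definition of $\cfn{s}$ in~\eqref{eq:def:vs} gives, for every $\fml{S}\subseteq\fml{F}$,
\[
  \cfn{s}(\fml{S};\fml{E})=\exv[\similar(\mbf{x})\,|\,\mbf{x}_{\fml{S}}=\mbf{v}_{\fml{S}}]=\exv[\kappa(\mbf{x})\,|\,\mbf{x}_{\fml{S}}=\mbf{v}_{\fml{S}}]=\cfn{e}(\fml{S};\fml{E}),
\]
so $\cfn{s}$ and $\cfn{e}$ coincide as set functions on $2^{\fml{F}}$.

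Next I would observe that the SHAP score in~\eqref{eq:sv} is a fixed linear functional of the characteristic function: the weights $\varsigma(|\fml{S}|)$ and the marginal $\Delta_i$ depend only on $\fml{F}$ and the feature $i$, not on which characteristic function is supplied. Hence $\cfn{s}\equiv\cfn{e}$ forces $\svn{S}(i;\fml{E},\cfn{s})=\svn{E}(i;\fml{E},\cfn{e})$ for every $i\in\fml{F}$, term by term. Computing $\svn{S}$ is therefore at least as hard as computing $\svn{E}$, and it then suffices to cite Corollary~8 of~\cite{vandenbroeck-jair22}, which asserts that computing $\svn{E}$ is \#P-hard for boolean classifiers given in CNF or in DNF.

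The single point requiring care --- and the only genuine obstacle --- is the polarity hypothesis $\kappa(\mbf{v})=1$ that~\cref{eqzk} builds in. If the reduction instances of~\cite{vandenbroeck-jair22} instead have $\kappa(\mbf{v})=0$, then $\similar(\mbf{x};\fml{E})=1-\kappa(\mbf{x})$ and hence $\cfn{s}(\fml{S};\fml{E})=1-\cfn{e}(\fml{S};\fml{E})$. I would dispatch this by noting that the additive constant cancels under the marginal $\Delta_i$, so that $\svn{S}(i)=-\svn{E}(i)$; the sign is irrelevant to hardness and the reduction still goes through. Either way the \#P-hardness of $\svn{E}$ transfers to $\svn{S}$, and I expect the bulk of any careful write-up to be exactly this routine polarity bookkeeping rather than any new combinatorial content.
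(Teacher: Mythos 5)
Your proposal is correct and follows essentially the route the paper itself announces in the main text (``From \cref{eqzk} and Corollary~8 in \cite{vandenbroeck-jair22}, it is immediate that\ldots''): identify $\similar$ with $\kappa$ via \cref{eqzk}, conclude $\cfn{s}\equiv\cfn{e}$, and transfer the known \#P-hardness. The paper's appendix proof phrases the last step through the polynomial equivalence of SHAP-score computation with expected-value computation and hence with model counting, invoking Valiant's \#P-completeness for CNF/DNF, but the content is the same; your explicit handling of the $\kappa(\mbf{v})=0$ polarity (giving $\svn{S}=-\svn{E}$) is a detail the paper leaves implicit.
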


Clearly, given~\cref{prop:cnfdnf}, then the computation of SHAP
scores for more complex boolean classifiers is also \#P-hard.

Moreover, a key recent result regarding the computation of SHAP scores
is that for the characteristic function $\cfn{e}$ there are
polynomial-time algorithms for computing
$\svn{E}$~\cite{barcelo-aaai21,barcelo-jmlr23}.
In contrast, for characteristic functions that build on WAXps/WCXps,
the computation of SHAP scores becomes NP-hard, even for d-DNNF and
DDBC classifiers.%
%%there is a change in complexity, as follows. %%\xtodo{(...)}
  \footnote{Deterministic Decomposable Boolean Circuits (DDBCs) and
  deterministic Decomposable Negation Normal (d-DNNF) form circuits
  denote well-known restrictions of boolean circuits, and are briefly
  overviewed in the supplemental materials.} %(see~\cref{ssec:ddbc}).

\begin{restatable}{rstprop}{ScddNNfNpHard}
  \label{prop:ddnnf}
  For $T\in\{A,C,N\}$, the computation of the SHAP scores $\svn{T}$ is
  NP-hard for d-DNNF \& DDBC classifiers.
\end{restatable}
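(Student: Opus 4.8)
The plan is to collapse the three cases to one, and then to reduce a known NP-hard decision problem to the \emph{computation} of the corresponding SHAP score. First I would invoke \cref{prop:AeqCeqnN}, which gives $\svn{A}(i;\fml{E},\cfn{a})=\svn{C}(i;\fml{E},\cfn{c})=-\svn{N}(i;\fml{E},\cfn{n})$ for every $i\in\fml{F}$. Consequently, a polynomial-time algorithm for any one of $\svn{A},\svn{C},\svn{N}$ would immediately yield one for the other two (the values are equal up to sign, and negation is free), so it suffices to establish NP-hardness for $\svn{A}$. I would also observe that d-DNNF circuits form a subclass of DDBCs (a d-DNNF is the special case in which negations are pushed to the leaves), so hardness established for d-DNNF inputs transfers verbatim to the larger class of DDBCs; hence it is enough to argue about d-DNNF.

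The second step exploits the relevancy characterization already proved. By \cref{rstprop:RCR}, $\svn{A}(i)=0$ iff feature $i$ is irrelevant, and since $\svn{A}(i)\ge 0$ (the sign result established earlier for $\cfn{a}$), we in fact have $\svn{A}(i)>0$ iff $i$ is relevant, i.e.\ iff $i\in\mfk{F}(\fml{E})$. Thus any procedure computing $\svn{A}(i)$ in polynomial time would, after a trivial comparison against $0$, decide in polynomial time whether feature $i$ belongs to some AXp. This reduces the feature relevancy problem (a polynomial-time Turing reduction) to computing $\svn{A}$, so it remains only to show that \emph{feature relevancy is NP-hard for d-DNNF classifiers}.

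For this final ingredient I would give a reduction from a suitable NP-complete problem (SAT, or a hitting-set-type problem), building from an instance $\phi$ a polynomial-size d-DNNF classifier $\kappa_\phi$, a target sample $(\mbf{v},q)$, and a distinguished feature $i$, engineered so that there exists $\fml{S}\subseteq\fml{F}\setminus\{i\}$ with $\waxp(\fml{S}\cup\{i\};\fml{E})$ but not $\waxp(\fml{S};\fml{E})$ exactly when $\phi$ is a yes-instance; such an $\fml{S}$ witnesses the criticality of $i$ and hence its membership in some AXp. The delicate point, which I expect to be the main obstacle, is that the gadget must be simultaneously (i) realizable as a polynomial-size d-DNNF and (ii) such that the criticality of the marked feature faithfully encodes $\phi$; the difficulty is genuine because verifying that a \emph{given} set is a WAXp is already polynomial for d-DNNF (it is an entailment test), so the hardness cannot come from WAXp checking and must instead be forced by the existential search over which feature is critical in some minimal explanation. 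I would maintain determinism and decomposability by partitioning the gadget over disjoint variable blocks so that the "guessed" assignment corresponds to the choice of $\fml{S}$. With relevancy shown NP-hard for d-DNNF, NP-hardness of computing $\svn{A}$ follows, and the cases $T\in\{C,N\}$ then follow at once from \cref{prop:AeqCeqnN} together with the inclusion of d-DNNF in DDBC.
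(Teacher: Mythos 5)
Your overall strategy is exactly the paper's: a polynomial-time Turing reduction from feature relevancy to the computation of $\svn{T}$, using the compliance property of \cref{rstprop:RCR} (so that $\svn{T}(i)=0$ iff $i$ is irrelevant), together with the observation that DDBCs generalize d-DNNFs. The collapse of the three cases via \cref{prop:AeqCeqnN} is a harmless stylistic variant --- the paper instead invokes compliance uniformly for all $T\in\{A,C,N\}$ --- and your remark that $\svn{A}(i)>0$ iff $i$ is relevant is consistent with the sign result proved earlier.

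The gap is in your third step. You treat the NP-hardness of feature relevancy for d-DNNF classifiers as something still to be established and offer only a sketch of a SAT-style gadget, which you yourself flag as the main obstacle and do not carry out; as written, the argument is therefore incomplete, since the entire hardness claim rests on that ingredient. The paper does not construct such a reduction: it simply cites the known result that feature relevancy is NP-complete for d-DNNF circuits~\cite{hcmpms-tacas23}, and the proof ends there. So your proposal is not wrong in approach, but it leaves unproved precisely the one fact that does the heavy lifting; to close it you should either import that result by citation or actually exhibit the gadget, and the latter is a nontrivial construction (it is the content of a separate paper), not a detail to be waved through.
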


%\begin{proof}
%  The proof builds on the NP-completeness of deciding feature
%  relevancy for d-DNNF
%
%\end{proof}

%\jnoteF{%
%  Summary of results:
%  \begin{enumerate}[nosep]
%  \item Given the polynomial equivalence between SHAP scores and
%    expected value, and the complexity for  DNF and CNF, then we have
%    \#P-hardness.
%  \item Since relevancy is NP-complete for d-DNNF, then finding the
%    SHAP scores for \{a,c,n\} is not in P, unless P=NP.
%  \end{enumerate}
%}

%\jnoteF{Must revisit the existing proofs of
%  complexity~\cite{vandenbroeck-aaai21,vandenbroeck-jair22}, and
%  understand whether the proof is actually for a restricted
%  (preferably, boolean) case.}

%\jnoteF{It should be the case that computing $\svn{a}$ and $\svn{c}$
%  are also \#P-hard. TbD}

%\begin{proposition}
%  For $t\in\{s,a,c,n\}$ ...
%\end{proposition}

\paragraph{Polynomial-time cases.}
%~\\
As shown above, the most significant tractability result that is known
for $\cfn{e}$ does not hold for $\cfn{t}$, with $t\in\{a,c,n\}$.
Nevertheless, some tractability results can be proved.
%
%
%\jnoteF{THIS PARAGRAPH IS NOT FINISHED, AND IS BEING REWRITTEN!!!}
%
%\jnoteF{%
%  Summary of results:
%  \begin{enumerate}[nosep]
%  \item SHAP scores in P given tabular representations
%  \item For \{s\}, SHAP scores is in P for DDBCs more general than d-DNNF)
%  \end{enumerate}
%}
%
For ML models represented by tabular representations (e.g.\ truth
tables), it is simple to devise algorithms polynomial on the size of
the classifier's representation~\cite{hms-corr23a}. As a result, it is
the case that,
\begin{restatable}{rstprop}{ScPolyTabRep}
  \label{prop:poly:tr}
  There exist polynomial-time algorithms for computing the SHAP scores
  $\svn{S}$, $\svn{A}$, $\svn{C}$ for ML models represented by
  tabular representations.
\end{restatable}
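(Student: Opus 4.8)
The plan is to exploit the fact that a tabular representation lists the model output for \emph{every} point of feature space, so its size is $\Theta(|\mbb{F}|)$ with $|\mbb{F}|=\prod_{i\in\fml{F}}|\mbb{D}_i|$. Writing $m=|\fml{F}|$ and using $|\mbb{D}_i|\ge2$, we get $2^{m}\le\prod_{i\in\fml{F}}|\mbb{D}_i|=|\mbb{F}|$, so the number of subsets of $\fml{F}$ is bounded by the input size. Consequently a brute-force evaluation of the Shapley summation~\eqref{eq:sv} already runs in time polynomial in the representation, and the proof reduces to showing that (i) each individual characteristic-function value is computable in polynomial time, and (ii) the number of terms and the bit-lengths of the coefficients stay polynomial.

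For step (i), I would first precompute, in a single pass over the table, the Boolean \emph{similarity column} $\similar(\mbf{x};\fml{E})$ for every row $\mbf{x}$: for classification this is the test $\kappa(\mbf{x})=\kappa(\mbf{v})$, for regression the test $|\rho(\mbf{x})-\rho(\mbf{v})|\le\delta$, each an $O(1)$ lookup per row. Given any $\fml{S}\subseteq\fml{F}$, the value $\cfn{s}(\fml{S};\fml{E})=\exv[\similar(\mbf{x})\,|\,\mbf{x}_{\fml{S}}=\mbf{v}_{\fml{S}}]$ is then obtained by one scan that averages the similarity column over the rows of $\Upsilon(\fml{S};\mbf{v})$, in $O(|\mbb{F}|)$ time. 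The WAXp- and WCXp-based functions reuse this scan: by~\eqref{eq:def:va}, \eqref{eq:def:vc} together with \cref{prop:AvsNvsC}, $\cfn{a}(\fml{S})=1$ iff the scan over $\Upsilon(\fml{S};\mbf{v})$ finds no dissimilar row (i.e.\ $\cfn{s}(\fml{S})=1$, equivalently $\waxp(\fml{S})$), and $\cfn{c}(\fml{S})=1$ iff a scan over $\Upsilon(\fml{F}\setminus\fml{S};\mbf{v})$ finds some dissimilar row (i.e.\ $\cfn{s}(\fml{F}\setminus\fml{S})<1$, equivalently $\wcxp(\fml{S})$). Each is therefore computable in $O(|\mbb{F}|)$ time.

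For step (ii), I would instantiate~\eqref{eq:sv} directly: for each feature $i$ and each $\fml{S}\subseteq\fml{F}\setminus\{i\}$ we evaluate $\Delta_i(\fml{S})=\cfn{t}(\fml{S}\cup\{i\})-\cfn{t}(\fml{S})$ with two of the above calls, weighted by $\varsigma(|\fml{S}|)$. The factorials defining $\varsigma$ have $O(m\log m)$ bit-length, so the weights are rationals of polynomial size; and since there are $2^{m-1}\le|\mbb{F}|$ subsets per feature, the total cost is $O(m\cdot|\mbb{F}|^{2})$, polynomial in the size of the tabular representation. This settles the claim for $\svn{S}$, $\svn{A}$ and $\svn{C}$ simultaneously.

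The only real subtlety worth stating carefully is the size accounting in the opening observation: tractability here hinges entirely on $m=O(\log|\mbb{F}|)$ for tabular representations, which is exactly what makes naive subset enumeration polynomial. This is precisely the property that fails for the succinct circuit classes of \cref{prop:ddnnf}, where $m$ may be exponentially larger than the representation size; hence no cleverness beyond the counting argument is needed here, and conversely no such argument can rescue the circuit case. A minor point to verify is that the bound $2^m\le|\mbb{F}|$ holds for arbitrary discrete (not merely Boolean) domains, which follows from $|\mbb{D}_i|\ge2$ for every non-degenerate feature.
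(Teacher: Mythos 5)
Your proof is correct and follows essentially the same route the paper intends: the paper gives no detailed argument for this proposition (it only remarks that polynomial algorithms are ``simple to devise'' and cites prior work), and the justification it is gesturing at is exactly your counting observation that a tabular representation has size at least $2^{m}$, so brute-force enumeration of all subsets together with an $O(|\mbb{F}|)$ scan per characteristic-function evaluation is polynomial in the input. Your write-up is, if anything, more complete than the paper's, including the bit-length accounting for $\varsigma$ and the contrast with the succinct-circuit case of \cref{prop:ddnnf}.
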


Since the recent results on the tractability of computing SHAP scores
for deterministic and decomposable circuits
(d-DNNFs)~\cite{barcelo-aaai21,barcelo-jmlr23} considering boolean
classifiers, then from~\cref{eqzk} and~\cite{barcelo-jmlr23}, it is
the case that,
\begin{restatable}{rstprop}{ScPolyddNNFSim}
  \label{prop:poly:ddnnf}
  The computation of SHAP scores $\svn{S}$ is in P for classifiers
  represented by non-boolean DDBCs. %d-DNNFs.
\end{restatable}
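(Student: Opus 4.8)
The plan is to reduce the computation of $\svn{S}$ under $\cfn{s}$ to the classical expected-value SHAP scores $\svn{E}$ of a \emph{boolean} classifier, and then invoke the known tractability result for boolean d-DNNFs. First I would observe that the similarity predicate is itself a boolean classifier: for a classification problem $\similar(\mbf{x};\fml{E})=[\kappa(\mbf{x})=q]$ takes values in $\{0,1\}$. Introduce the auxiliary boolean model $\fml{M}'$ whose classification function is $\tau'(\mbf{x}):=\similar(\mbf{x};\fml{E})$, together with the explanation problem $\fml{E}'=(\fml{M}',(\mbf{v},1))$ (note $\tau'(\mbf{v})=[\kappa(\mbf{v})=q]=1$). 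Comparing~\eqref{eq:def:vs} with~\eqref{eq:cfs} and~\eqref{eq:evdef}, we get $\cfn{s}(\fml{S};\fml{E})=\exv[\similar(\mbf{x})\,|\,\mbf{x}_{\fml{S}}=\mbf{v}_{\fml{S}}]=\cfn{e}(\fml{S};\fml{E}')$ for every $\fml{S}\subseteq\fml{F}$. Since the SHAP score in~\eqref{eq:sv} depends on the characteristic function only through its values on subsets, this identity lifts through~\eqref{eq:def:delta} to $\svn{S}(i;\fml{E},\cfn{s})=\svn{E}(i;\fml{E}',\cfn{e})$ for every feature $i$. Thus computing $\svn{S}$ for the original (non-boolean) model is exactly computing the classical $\svn{E}$ of the boolean classifier $\tau'$, which is the boolean specialization of~\cref{eqzk}.

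Second, I would argue that $\tau'$ admits a boolean d-DNNF representation of size polynomial in the given non-boolean DDBC for $\kappa$, obtainable in polynomial time. Here I would invoke the structure of (non-boolean) DDBCs as defined in the supplemental material: determinism and decomposability permit conditioning the circuit on a fixed output value $q$, yielding a deterministic, decomposable \emph{boolean} circuit that computes the indicator $[\kappa(\mbf{x})=q]$; determinism across output values guarantees this indicator is well defined, and decomposability is inherited by the conditioned circuit without size blow-up. (If the DDBC already stores one boolean deterministic-decomposable sub-circuit per output class, this step is immediate: one simply selects the sub-circuit associated with $q$.) The resulting object is precisely a boolean d-DNNF/DDBC representing $\tau'=\similar(\cdot;\fml{E})$.

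Third, I would apply the recent tractability result~\cite{barcelo-jmlr23}, namely that $\svn{E}$ is computable in polynomial time for boolean classifiers given as d-DNNFs, to the conditioned circuit representing $\tau'$. This yields a polynomial-time algorithm for each $\svn{E}(i;\fml{E}',\cfn{e})$, hence for each $\svn{S}(i;\fml{E},\cfn{s})$; iterating over the $m=|\fml{F}|$ features keeps the total cost polynomial, establishing the claim.

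The main obstacle I anticipate is the second step: verifying, against the precise DDBC definition adopted in the supplement, that the boolean indicator circuit for output value $q$ is extractable in polynomial time while preserving determinism and decomposability and incurring only polynomial size. Everything else is either the bookkeeping identity $\svn{S}=\svn{E}(\cdot;\fml{E}')$ or a direct citation of~\cite{barcelo-jmlr23}, so I would concentrate the technical effort on pinning down this value-conditioning transformation for DDBCs.
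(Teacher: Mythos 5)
Your first and third steps reproduce the paper's own argument exactly: the paper justifies this proposition in a single line, by combining \cref{eqzk} (the similarity predicate of a boolean-output classifier coincides with the classifier itself) with the polynomial-time algorithm of \cite{barcelo-jmlr23} for $\svn{E}$ over deterministic and decomposable circuits, and your identity $\cfn{s}(\fml{S};\fml{E})=\cfn{e}(\fml{S};\fml{E}')$ (with $\fml{E}'$ your auxiliary boolean explanation problem), hence $\svn{S}(i;\fml{E},\cfn{s})=\svn{E}(i;\fml{E}',\cfn{e})$, is precisely that reduction. Where you diverge is in reading ``non-boolean DDBC'' as a circuit with a non-boolean \emph{output}. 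In the paper's supplemental definition a DDBC is a boolean circuit, and the ``non-boolean'' generalization concerns the \emph{inputs} (multi-valued discrete features), following \cite{barcelo-jmlr23}; the output is already in $\{0,1\}$, so \cref{eqzk} applies verbatim (adding a NOT gate at the output when $\kappa(\mbf{v})=0$, which preserves determinism and decomposability), and your entire second step is unnecessary. Under your more general multi-class reading, that second step --- extracting, in polynomial time, a deterministic and decomposable boolean circuit for the indicator $[\kappa(\mbf{x})=q]$ --- becomes the crux, and you correctly flag that you have not established it; the paper neither defines multi-output DDBCs nor addresses this case, so nothing in the paper fills that gap. In short: same route as the paper, with one extra step that is either vacuous (under the paper's reading) or left unproven (under yours).
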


\section{Similarity-Based SHAP}
\label{sec:sshap}

This section outlines a first step towards addressing the issues with
SHAP scores reported in earlier work, and observed in the tool
SHAP~\cite{lundberg-nips17}.
Instead of running SHAP with the original training data and the
original classifier, the similarity-based SHAP (referred to as sSHAP)
replaces the original classifier by the similarity predicate, and
reorganizes training data accordingly. In terms of running time
complexity, the impact of the modifications to SHAP are negligible.
More importantly, sSHAP will be approximating $\svn{S}$, since the
underlying characteristic funtion is $\cfn{s}$.
In practice, sSHAP is built on top of the SHAP
tool~\cite{lundberg-nips17}.
%
%Some preliminary results validate the advantages of sSHAP when
%compared to SHAP, and are included in~\cref{app:rsshap}.

As noted earlier in the paper, the use of $\cfn{s}$ does not
guarantee the non-existence of some of the issues reported in earlier 
work~\cite{hms-ijar24}, since it is known that even boolean
classifiers can exhibit a number of issues related with the relative
order of feature importance. %% procuding misleading information.
Nevertheless, another question is whether $\cfn{s}$ can serve to
correct SHAP scores (obtained with $\cfn{e}$) in classifiers for which
the reported issues rely on non-boolean classification.

\paragraph{Difference in SHAP scores for example classifiers.}
To validate the improvements obtained with $\cfn{s}$ with respect to
$\cfn{e}$, we studied the non-boolean classifiers reported
in~\cite{hms-ijar24}% 
\footnote{%
From~\cite{hms-ijar24}, we consider
(i) the two DTs of case study 2 (Fig.~3 in~\cite{hms-ijar24}),
referred to as cs02a and cs02b;
(ii) the two DTs of case study 3 (Fig.~5 in~\cite{hms-ijar24}),
referred to as cs03a and cs03b; and
(iii) the two DTs of case study 4 (Fig.~8 in~\cite{hms-ijar24}),
referred to as cs04a and cs04b.
Moreover, for cs02a, cs02b, cs03a and cs03b there exist 16 instances,
whereas for cs04a and cs04b there exist 24 instances (because of a
discrete but non-boolean domain for one of the features.)
}.
For each classifier, each of the possible instances is analyzed, and
the SHAP scores produced by the tools SHAP and sSHAP are recorded. If
an irrelevant feature is assigned an absolute value larger than some
other relevant feature, then a mismatch is declared.
\cref{tab:shap_sshap_dt} summarizes the results obtained with the two
tools, where columns \emph{SHAP-FRP mismatch} shown the number of
mismatches obtained with SHAP, and column \emph{sSHAP-FRP mismatch}
shows the number of mismatches obtained with sSHAP\footnote{%
It should be noted that sSHAP can replace SHAP in any application
domain, ensuring similar performance. However, a more extensive
assessment of the quality of the results of the two tools is
unrealistic at present; we would have to be able to compute exact SHAP
scores, and this is only computationally feasible for very simple ML
models, e.g.\ restricted examples of DTs.}.
\begin{table}[t]
  \centering
  \renewcommand{\tabcolsep}{0.485em}
  \begin{tabular}{ccc}
    \toprule
    DT & SHAP-FRP mismatch & sSHAP-FRP mismatch \\ %& sSHAP is Better & sSHAP is Worse
    \midrule
    cs02a & 11 & 0 \\ %& 11 & 0 
    cs02b & 4 & 0 \\ %& 4 & 0 
    cs03a & 5 & 0 \\ %& 5 & 0 
    cs03b & 4 & 0 \\ %& 4 & 0 
    cs04a & 15 & 0 \\ %& 15 & 0 
    cs04b & 4 & 0 \\ %& 4 & 0 
    \bottomrule
  \end{tabular}
  \caption{Comparing SHAP with sSHAP.}
  \label{tab:shap_sshap_dt}
\end{table}
As can be concluded, SHAP produces several mismatches. In contrast,
sSHAP produces no mismatch. It should be noted that both tools are
approximating the SHAP scores given the respective characteristic
functions, i.e. the computed scores are not necessarily the ones
dictated by \eqref{eq:sv}.

Observe that $\cfn{s}$ consists of replacing the original classifier
by a new boolean classifier. Hence, from~\cite{hms-ijar24},
such boolean classifiers can also produce misleading information.
Nevertheless, given the results above and other experiments,
in the cases where $\cfn{s}$ was used, we were unable to observe % ...
some of the issues proposed in earlier
work~\cite{msh-cacm24,hms-ijar24}. It is the subject of future work to
decide whether such issues can occur for boolean classifiers.

\begin{comment}
%
... issue I8 (as proposed in~\cite{hms-ijar24}):
%
\begin{equation}
  \forall(j\in\fml{F}).\left([\relevant(j)\land(\sv(j)=0)]\lor
         [\irrelevant(j)\land(\sv(j)\not=0)]\right)
\end{equation}
%
As a result, given the observed experimental results, we make the
following conjecture: 
%
\begin{conjecture} \label{conj:cfs}
  For the characteristic function $\cfn{S}$, issue I8 is not observed.
\end{conjecture}
%
% Claim of Olivier? (to check)
%\begin{conjecture}
%  For $\cfn{s}$,
%  $\neg[\forall(i\in\fml{F}).\relevant(i)\limply(\svn{s}(i)=0)]$ holds true.
%\end{conjecture}
%
In the case that the above conjecture holds, another open question is
whether the use of $\cfn{s}$ also prevents more flexible variants of
issue I8 from being observed.
%
\end{comment}
    % The SSHAP tool

\section{Conclusions} \label{sec:conc}

Recent work demonstrated the existence of classifiers for which the
exact SHAP scores are unsatisfactory.
%
%This paper presents additional evidence, and argues that similar
%problems occur for SHAP scores defined in terms of
%baselines~\cite{blockbaum-aistats20,najmi-icml20}.
%
%More importantly, the
This paper argues that the issues identified with SHAP scores result
from the characteristic functions used in earlier work. As a result,
the paper devises several properties which characteristic functions
must respect in order to compute SHAP scores that do not exhibit those
issues.
Complexity-wise, the paper argues that the proposed characteristic
functions are as hard to compute as the characteristic functions used
in earlier works studying the complexity of SHAP
scores~\cite{vandenbroeck-aaai21,barcelo-aaai21,vandenbroeck-jair22,barcelo-jmlr23},
or harder. 
%
%Furthermore,
Finally, the paper proposes simple modifications to the tool
SHAP~\cite{lundberg-nips17}, thereby obtaining SHAP scores that
respect some of the proposed properties.
%
%Finally, the paper proves that, for the characteristic functions
%respecting all of the proposed properties, computing SHAP scores is in
%polynomial time.

%Future research will prove or disprove~\cref{conj:cfs}. Another line
%of research will investigate the use of approximate methods,
%e.g.\ by extending those used in SHAP, to approximate SHAP scores
%defined with the other characteristic functions, i.e.\ $\cfn{a}$,
%$\cfn{c}$ or $\cfn{n}$. Similar efforts have been studied in the case
%of alternatives to SHAP scores~\cite{ignatiev-corr23b}. 
     % Conclusions & research directions
%
%%%%%%%%%%%%%%%%%%%%%%%%%%%%%%%%%%%%%%%%%%%%%%%%%%%%%%%%%%%%%%%%%%%%%%%%%%%%%%%
%%%%%%%%%%%%%%%%%%%%%%%%%%%%%%%%%%%%%%%%%%%%%%%%%%%%%%%%%%%%%%%%%%%%%%%%%%%%%%%
% APPENDIX
%%%%%%%%%%%%%%%%%%%%%%%%%%%%%%%%%%%%%%%%%%%%%%%%%%%%%%%%%%%%%%%%%%%%%%%%%%%%%%%
%%%%%%%%%%%%%%%%%%%%%%%%%%%%%%%%%%%%%%%%%%%%%%%%%%%%%%%%%%%%%%%%%%%%%%%%%%%%%%%
%\clearpage\newpage

\appendix
%%\onecolumn

%%%%%%%%%%%%%%%%%%%%%%%%%%%%%%%%%%%%%%%%%%%%%%%%%%%%%%%%%%%%%%%%%%%%%%%%%%%%%%%%
%% Supplementary Materials
%%%%%%%%%%%%%%%%%%%%%%%%%%%%%%%%%%%%%%%%%%%%%%%%%%%%%%%%%%%%%%%%%%%%%%%%%%%%%%%%
%%
\section{Appendix} \label{app:appendix}

%\section{You \emph{can} have an appendix here.}

%You can have as much text here as you want. The main body must be at
%most $8$ pages long. For the final version, one more page can be
%added. If you want, you can use an appendix like this one, even using
%the one-column format.

%
%%%%%%%%%%%%%%%%%%%%%%%%%%%%%%%%%%%%%%%%%%%%%%%%%%%%%%%%%%%%%%%%%%%%%%%%%%%%%%%%
%% Proofs
%%%%%%%%%%%%%%%%%%%%%%%%%%%%%%%%%%%%%%%%%%%%%%%%%%%%%%%%%%%%%%%%%%%%%%%%%%%%%%%%
%%
\subsection{Proofs}
\label{app:proofs}

\PropStrongCIImpliesWeakCI*

\begin{proof}
  If a characteristic function is strongly value independent, it
  suffices to restrict the choices of $\mu$ to surjective functions to
  make it weakly value independent.
  %%%\xtodo{} %ToDo
\end{proof}

%%\CFsCFaCFcCFnResOne*
%%
%%\begin{proof}
%%  Let $\fml{S}_p\subseteq\fml{S}_q$. % Eg. Sp={1} Sq={1,2}
%%  %
%%  First, let $t\in\{s,a,n\}$.
%%  Thus, by definition,
%%  $\Upsilon(\fml{S}_q)\subseteq\Upsilon(\fml{S}_p)$.\\
%%  %
%%  Now, if $\cfn{t}(\fml{S}_p)=u$, and $u\in\{0,1\}$, then
%%  $\forall(\mbf{x}\in\Upsilon(\fml{S}_p)).\zeta(\mbf{x})=u$.
%%  Since $\Upsilon(\fml{S}_q)\subseteq\Upsilon(\fml{S}_p)$, then  it
%%  is the case that
%%  $\forall(\mbf{x}\in\Upsilon(\fml{S}_q)).\zeta(\mbf{x})=u$.\\
%%  Second, let $t\in\{c\}$. In this case, $\fml{S}$ denotes the
%%  non-fixed features.
%%  Thus, by definition,
%%  $\Upsilon(\fml{F}\setminus\fml{S}_p)\subseteq\Upsilon(\fml{F}\setminus\fml{S}_q)$.\\
%%  %%\xtodo{(...)}\\
%%  %
%%  Now, if $\cfn{t}(\fml{S}_p)=u$, and $u\in\{0,1\}$, then
%%  $\forall(\mbf{x}\in\Upsilon(\fml{S}\setminus\fml{S}_q)).\zeta(\mbf{x})=u$.
%%  Since $\Upsilon(\fml{F}\setminus\fml{S}_p)\subseteq\Upsilon(\fml{F}\setminus\fml{S}_q)$, then  it
%%  is the case that
%%  $\forall(\mbf{x}\in\Upsilon(\fml{F}\setminus\fml{S}_p)).\zeta(\mbf{x})=u$.\\
%%  This completes the proof.
%%  %\xtodo{} %ToDo
%%\end{proof}

\CFaCFcCFnResOne*

\begin{proof}(Sketch)
  We only consider $\cfn{a}$. (The proof for $\cfn{c}$ and $\cfn{n}$
  follows from~\cref{prop:AeqCeqnN}.)\\
  It is plain that $\Delta_{i}(\fml{S};\fml{E},\cfn{a})\in\{-1,0,1\}$,
  given the possible values that $\cfn{a}$ can take.
  In fact, it is the case that
  $\Delta_{i}(\fml{S};\fml{E},\cfn{a})\in\{0,1\}$.
  If a set $\fml{S}\subseteq\fml{F}$ is a WAXp, then a proper superset
  is also a WAXp; hence it is never the case that
  $\Delta_{i}(\fml{S};\fml{E},\cfn{a})=-1$.
  Since every $\Delta_{i}(\fml{S};\fml{E},\cfn{a})\ge0$, then
  $\svn{A}(i;\fml{E},\cfn{a})\ge0$.
  %\xtodo{} %ToDo
\end{proof}

%%\CFaCFcCFnResTwo*
\CFaCFnResTwo*
%
%\begin{proof}
%  \xtodo{} %ToDo
%\end{proof}

\begin{proof}
  We consider each case separately:
  \begin{enumerate}[nosep]
  \item If $\cfn{a}(\fml{S};\fml{E})=1$, then, as noted earlier in the
    paper, $\similar(\mbf{x};\fml{E})=1$ for all points
    $\mbf{x}\in\Upsilon(\fml{S})$, and so the ML model's prediction
    is indistinguishable from $q$ for all points in
    $\Upsilon(\fml{S})$. Hence, by definition, $\fml{S}$ is a WAXp.  
    Conversely, if $\fml{S}$ is an WAXp, then the prediction must be
    indistinguishable from $q$ for all points $\mbf{x}$ in
    $\Upsilon(\fml{S})$,
    $\forall(\mbf{x}\in\Upsilon(\fml{S})).[\similar(\mbf{x};\fml{E})]$. Thus,
    $\cfn{a}(\fml{S};\fml{E})=1$.
  \item If $\cfn{n}(\fml{S};\fml{E})=1$, then 
    $\similar(\mbf{x};\fml{E})\not=1$ for some point(s)
    $\mbf{x}\in\Upsilon(\fml{S})$, and so the ML model's prediction
    is distinguishable from $q$ for some point(s) in
    $\Upsilon(\fml{S})$. Hence, by definition,
    $\fml{F}\setminus\fml{S}$ is a WCXp. 
    Conversely, if $\fml{F}\setminus\fml{S}$ is an WCXp, then the
    prediction must be distinguishable from $q$ for some point(s)
    $\mbf{x}$ in $\Upsilon(\fml{S})$,
    i.e.\ $\exists(\mbf{x}\in\Upsilon(\fml{S})).[\similar(\mbf{x};\fml{E})\not=1]$. Thus,
    $\cfn{n}(\fml{S};\fml{E})=1$. 
  \item If $\cfn{c}(\fml{S};\fml{E})=1$, then
    $\similar(\mbf{x};\fml{E})\not=1$ for some point(s)
    $\mbf{x}\in\Upsilon(\fml{F}\setminus\fml{S})$, and so the 
    ML model's prediction is distinguishable from $q$ for some
    point(s) in $\Upsilon(\fml{F}\setminus\fml{S})$. Hence, by
    definition, $\fml{S}$ is a WCXp. Conversely, if $\fml{S}$ is an
    WCXp, then the prediction must be distinguishable from $q$ for
    some point(s) $\mbf{x}$ in $\Upsilon(\fml{F}\setminus\fml{S})$,
    i.e.\ $\exists(\mbf{x}\in\Upsilon(\fml{F}\setminus\fml{S})).[\similar(\mbf{x};\fml{E})\not=1]$. Thus,
    $\cfn{c}(\fml{S};\fml{E})=1$. 
    %
%  \item By definition, it is plain that
%    $\cfn{a}(\fml{S})+\cfn{n}(\fml{S})=1$, for any
%    $\fml{S}\subseteq\fml{F}$, because it must be that case that 
%    either $\cfn{a}(\fml{S})=1$ or $\cfn{a}(\fml{S})<1$, but not both.
%    %
%    Given the values that $\cfn{a}(\fml{S})$ can take, it is also
%    plain that $\Delta_a(i,\fml{S})\in\{\tm1,0,1\}$. Moreover, if
%    $\Delta_a(i,\fml{S})=\tm1$, then $\Delta_n(i,\fml{S})=1$. If
%    $\Delta_a(i,\fml{S})=1$, then $\Delta_n(i,\fml{S})=\tm1$. Also, 
%    if $\Delta_a(i,\fml{S})=0$, then $\Delta_n(i,\fml{S})=0$.
%    Thus, for any $i\in\fml{F}$ and $\fml{S}\subseteq\fml{F}$,
%    $\Delta_n(i,\fml{S})=\tm\Delta_a(i,\fml{S})$.
%    Hence, the result follows.
    \qedhere
  \end{enumerate}
\end{proof}

\CFaCFcResOne*

%\begin{proof}
%  ToDo
%\end{proof}

\begin{proof}
  We consider each case separately:
  \begin{enumerate}[nosep]
    \item By definition, it is plain that
    $\cfn{a}(\fml{S};\fml{E})+\cfn{n}(\fml{S};\fml{E})=1$, for any
    $\fml{S}\subseteq\fml{F}$, because it must be the case that 
    either $\cfn{s}(\fml{S};\fml{E})=1$ or
    $\cfn{s}(\fml{S};\fml{E})<1$, but not both.
    Given the values that $\cfn{a}(\fml{S};\fml{E})$ can take, it is
    also plain that
    $\Delta_{i}(\fml{S};\fml{E},\cfn{a})\in\{\tm1,0,1\}$. Moreover, if 
    $\Delta_{i}(\fml{S};\fml{E},\cfn{a})=\tm1$, then
    $\Delta_{i}(\fml{S};\fml{E},\cfn{n})=1$. If
    $\Delta_{i}(\fml{S};\fml{E},\cfn{a})=1$, then
    $\Delta_{i}(\fml{S};\fml{E},\cfn{n})=\tm1$. Also, if
    $\Delta_{i}(\fml{S};\fml{E},\cfn{a})=0$, then
    $\Delta_{i}(\fml{S};\fml{E},\cfn{n})=0$.
    Thus, for any $i\in\fml{F}$ and $\fml{S}\subseteq\fml{F}$,
    $\Delta_{i}(\fml{S};\fml{E},\cfn{n})=\tm\Delta_{i}(\fml{S};\fml{E},\cfn{a})$.
    Hence, the result follows.
  \item
    %\xtodo{(...)}
    Since
    $\forall(\fml{S}\subseteq\fml{F}).\wcxp(\fml{F}\setminus\fml{S};\fml{E})\leftrightarrow\neg\waxp(\fml{S};\fml{E})$,
    %$\forall(\fml{S}\subseteq\fml{F}).[\fml{F}\setminus\fml{S}\in\mbb{W}_C\leftrightarrow\fml{S}\not\in\mbb{W}_A]$
    by definition, then we have
    $\forall(i\in\fml{F}),\forall(\fml{S}\subseteq(\fml{F}\setminus\{i\}))$,
    \begin{align}
      & \Delta_{i}(\fml{S};\fml{E},\cfn{a})=1
      \nonumber\\
      \Leftrightarrow&
      \neg\waxp(\fml{S};\fml{E})\land\waxp(\fml{S}\cup\{i\};\fml{E})
      \nonumber\\
      \Leftrightarrow&
      \wcxp(\fml{F}\setminus\fml{S};\fml{E})\land\neg\wcxp(\fml{F}\setminus(\fml{S}\cup\{i\});\fml{E})
      \nonumber\\
      \Leftrightarrow&
      \wcxp(\fml{F}\setminus\fml{S};\fml{E})\land\neg\wcxp((\fml{F}\setminus\{i\})\setminus\fml{S};\fml{E})
      \nonumber\\
      \Leftrightarrow&
      \Delta_{i}((\fml{F}\setminus\{i\})\setminus\fml{S};\fml{E},\cfn{c})=1
      \nonumber
    \end{align}
    Now, let
    $\Phi(i):=\{\fml{S}\subseteq(\fml{F}\setminus\{i\})\,|\,\Delta_{i}(\fml{S};\fml{E},\cfn{a})=1\}$. Then,
    by construction,
    $\svn{A}(i;\fml{E},\cfn{a})=\sum_{\fml{S}\in\Phi(i)}\varsigma(|\fml{S}|)$
    (because $\Delta_{i}(\fml{S};\fml{E},\cfn{a})=0$ otherwise) and,
    by the equivalence above,
    $\svn{C}(i;\fml{E},\cfn{c})=\sum_{\fml{S}\in\Phi(i)}\varsigma(|\fml{F}\setminus\{i\})\setminus\fml{S}|)$.
    However, it is immediate to prove that 
    $\varsigma(|\fml{S}|)=\varsigma(|\fml{F}\setminus\{i\})\setminus\fml{S}|)$,
    and so the two sums are also equal. %\\
    This proves the result.
    \qedhere
  \end{enumerate}
\end{proof}

%%%\CFaCFnResTwo*

%\begin{proof}
%ToDo
%\end{proof}

%\PolyEquivSc*
%
%\begin{proof}
%  ToDo
%\end{proof}

\StrongCIRes*

\begin{proof}
  For a characteristic function to respect strong value independence,
  the SHAP scores must not change if the values are mapped using some
  function $\mu$. %Let $c_{\mu}=\mu(c)$.
  By hypothesis, for any point
  $\mbf{x}\in\mbb{F}$, the resulting ML models will predict $\mu(q)$
  iff the original ML model predicts $q$. This means the resulting
  similarity predicates are the same for the two ML models, and so
  the SHAP scores $\svn{T}$, $T\in\{S,A,C,N\}$, remain unchanged.
  %%\xtodo{} %ToDo
\end{proof}

\RelevancyComplianceRes*

\begin{proof}
  First, we consider $\cfn{a}$. %\\
  Let $i\in\fml{F}$ be an irrelevant feature. %\\
  It is plain that $\Delta_{i}(\fml{S};\fml{E},\cfn{a})\in\{-1,0,1\}$,
  given the possible values that $\cfn{a}$ can take. However, as
  argued above, $\Delta_{i}(\fml{S};\fml{E},\cfn{a})\in\{0,1\}$, since
  if a set $\fml{S}\subseteq\fml{F}$ is a WAXp, then a proper superset
  is also a WAXp; hence it is never the case that
  $\Delta_{i}(\fml{S};\fml{E},\cfn{a})=-1$. %\\ 
  We are interested in the sets
  $\fml{S}\subseteq(\fml{F}\setminus\{i\})$ for which
  $\Delta_{i}(\fml{S};\fml{E},\cfn{a})=1$, since these are the only
  ones that contribute to making $\svn{A}(i;\fml{E},\cfn{a})\not=0$.
  For $\Delta_{i}(\fml{S};\fml{E},\cfn{a})=1$, it must be the case
  that $\cfn{a}(\fml{S};\fml{E})=0$ and
  $\cfn{a}(\fml{S}\cup\{i\};\fml{E})=1$.
  However, this would imply that $i$ would be included in some
  AXp~\cite{hcmpms-tacas23}.
  But $i$ is irrelevant, and so it is not
  included in any AXp. Hence, there exists no set
  $\fml{S}\subseteq(\fml{F}\setminus\{i\})$ such that
  $\Delta_{i}(\fml{S};\fml{E},\cfn{a})=1$, and so
  $\svn{A}(i;\fml{E},\cfn{a})=0$.\\ 
  Let $\svn{A}(i;\fml{E},\cfn{a})=0$. %\\
  An analysis similar to the above one allows concluding that there
  exist no sets $\fml{S}$ such that
  $\Delta_{i}(\fml{S};\fml{E},\cfn{a})=1$. Hence, it is never the case
  that $\cfn{a}(\fml{S};\fml{E})=0$ and
  $\cfn{a}(\fml{S}\cup\{i\};\fml{E})=1$. Thus, $i$ is not included in
  any AXp, and so it is irrelevant.\\
  For $\cfn{c}$ and $\cfn{n}$, it suffices to
  invoke~\cref{prop:AeqCeqnN}; hence, the features for which
  $\svn{A}(i)=0$ are exactly the ones for which $\svn{C}(i)=0$ and
  $\svn{N}(i)=0$.\\
  This concludes the proof that $\svn{T}$, with $T\in\{A,C,N\}$,
  respects \eqref{eq:compliance}.
  %does not mislead.
  %%\xtodo{} %ToDo
\end{proof}

%\BoolClfZeta*
%
%\begin{proof}
%  ToDo
%\end{proof}

\CplCFsCnfDnfZeta*

\begin{proof}
  From~\cite{vandenbroeck-aaai21,vandenbroeck-jair22}, it is known
  that computing SHAP scores is polynomially equivalent to computing
  the expected value. In the boolean case, and so in the case of
  $\cfn{s}$, this is polynomially equivalent to model counting.
  Furthermore, model counting for DNF and CNF formulas is
  \#P-complete~\cite{valiant-sjc79}. Thus, computing the SHAP scores
  using $\cfn{s}$ is \#P-hard.
  %\xtodo{} %ToDo
\end{proof}

\ScddNNfNpHard*

\begin{proof}
  We reduce the problem of feature relevancy to the problem of
  computing the SHAP scores $\svn{T}$, with $T\in\{A,C,N\}$.
  Since feature relevancy is NP-complete for d-DNNF
  circuits~\cite{hcmpms-tacas23}, this proves that computing the SHAP
  scores $\svn{T}$, with $T\in\{A,C,N\}$ is NP-hard.\\
  Given an explanation problem we can decide feature membership as
  follows. We compute the SHAP score for each feature $i\in\fml{F}$.
  Moreover, since $\cfn{t}$, $t\in\{a,c,n\}$ are compliant with
  feature (ir)relevancy (see~\eqref{eq:compliance}), then
  $\svn{T}(i;\fml{E},\cfn{t})=0$ iff feature $i$ is irrelevant. Hence,
  if we could compute the SHAP scores in polynomial-time, then we
  could decide feature relevancy in polynomial-time, and so computing
  the SHAP-scores for d-DNNFs is NP-hard.\\
  Now, since DDBCs generalize d-DNNFs~\cite{barcelo-jmlr23}, then 
  computing the SHAP-scores for DDBCs is also NP-hard.
\end{proof}

%\ScPolyTabRep*
%
%\begin{proof}
%  ToDo
%\end{proof}

%\ScPolyddNNFSim*
%
%\begin{proof}
%  This iToDo
%\end{proof}

%\clearpage

%
%%%%%%%%%%%%%%%%%%%%%%%%%%%%%%%%%%%%%%%%%%%%%%%%%%%%%%%%%%%%%%%%%%%%%%%%%%%%%%%%
%% Issues with SHAP scores (baselines)
%%%%%%%%%%%%%%%%%%%%%%%%%%%%%%%%%%%%%%%%%%%%%%%%%%%%%%%%%%%%%%%%%%%%%%%%%%%%%%%%
%%
\subsection{Limitations of SHAP Scores Based on Baselines}
\label{app:blines}

We focus on BShap~\cite{najmi-icml20}; similar analyzes could be made
for other baselines~\cite{blockbaum-aistats20,najmi-icml20}.

Throughout this section, the baseline is a point $\mbf{w}\in\mbb{F}$. 
Furthermore, for each $\fml{S}\subseteq\fml{F}$, let
$\mbf{x}^{\fml{S}}_{b}$ be such that
$x^{\fml{S}}_{b,i}=\tn{ITE}(i\in\fml{S},v_i,w_i)$.

Given $\mbf{w}\in\mbb{F}$, the BShap characteristic function $\cfn{b}$
is defined by $\cfn{b}(\fml{S})=\kappa(\mbf{x}^{\fml{S}}_b)$, for
$\fml{S}\subseteq\fml{F}$.

\paragraph{Remarks about baselines.}
%~\\
Analysis of the definition of BShap~\cite{najmi-icml20} allows proving
the following results.

\begin{proposition}
  The following holds:
  \begin{enumerate}[nosep]
  \item BShap is only well-defined if all the domains are boolean,
    i.e. $\mbb{F}=\{0,1\}^m$.
  \item BShap is only well-defined when $\mbf{w}=\neg\mbf{v}$.
  \end{enumerate}
\end{proposition}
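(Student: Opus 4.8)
The plan is to read ``well-defined'' as the property the paper uses to judge every characteristic function, namely compliance with feature (ir)relevancy \eqref{eq:compliance}: $\svn{B}(i)=0$ should hold exactly when feature $i$ is irrelevant. (BShap is literally a total function for any baseline and any domain, so the claim can only be about this semantic adequacy.) Both parts are then proved by contraposition: assuming the stated restriction fails, I would exhibit an explanation problem on which a relevant feature receives SHAP score $0$, violating \eqref{eq:compliance}.

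The common mechanism is that $\cfn{b}$ evaluates $\kappa$ only on the points $\{\mbf{x}^{\fml{S}}_b:\fml{S}\subseteq\fml{F}\}$, and on coordinate $i$ these take just the two values $v_i$ and $w_i$. For part~2 I would first record the pivotal fact: if $w_i=v_i$, then for every $\fml{S}\subseteq\fml{F}\setminus\{i\}$ the points $\mbf{x}^{\fml{S}}_b$ and $\mbf{x}^{\fml{S}\cup\{i\}}_b$ coincide, since they can differ only in coordinate $i$ where both equal $v_i=w_i$. Hence $\Delta_i(\fml{S};\fml{E},\cfn{b})=0$ for all such $\fml{S}$ and $\svn{B}(i)=0$, independently of $\kappa$. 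It then suffices to take a boolean model in which feature $i$ is relevant --- e.g.\ a non-constant single-feature classifier, whose feature is trivially the unique AXp, hence belongs to $\mathfrak{F}(\fml{E})$ --- to obtain a relevant feature scored $0$. So compliance forces $w_i\neq v_i$ for every $i$, i.e.\ $\mbf{w}=\neg\mbf{v}$.

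For part~1 I would observe that once $|\mbb{D}_i|\ge 3$ there is a value $u\in\mbb{D}_i\setminus\{v_i,w_i\}$ that $\cfn{b}$ never visits. Fixing the given baseline $\mbf{w}$, I would then design a classifier whose dependence on feature $i$ is witnessed only at $x_i=u$: let $\kappa$ be constant on all points with $x_i\in\{v_i,w_i\}$ and let it take a different class at some point with $x_i=u$. Feature $i$ is then relevant (the class change at $x_i=u$ forces $i$ into an AXp), yet $\cfn{b}$ is constant along coordinate $i$ over the whole BShap cube, so every $\Delta_i(\fml{S};\fml{E},\cfn{b})=0$ and $\svn{B}(i)=0$; compliance fails. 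Thus well-definedness requires $|\mbb{D}_i|\le 2$ for all $i$, i.e.\ boolean domains, which also makes the expression $\neg\mbf{v}$ in part~2 meaningful.

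The hard part will not be the algebra but pinning down the intended reading of ``well-defined'' and making the counterexamples airtight. I must verify in each case that the feature flagged relevant genuinely lies in some AXp (so that $\relevant(i)$ holds by the definition of $\mathfrak{F}(\fml{E})$), and I must keep the quantifier order correct in part~1: the baseline $\mbf{w}$ is chosen first and the adversarial classifier built afterwards, so the argument really shows that no single baseline can cover a domain of size $\ge 3$. A secondary subtlety is that $\cfn{b}$ returns class values, so forming $\Delta_i$ already presupposes numerical (ordinal) classes; I would flag this as an independent failure of numerical neutrality, but it is irrelevant to the vanishing-difference arguments above, which only use that the two evaluation points are equal.
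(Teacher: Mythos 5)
Your proposal is correct and rests on the same key observation as the paper's proof: $\cfn{b}$ only ever queries $\kappa$ on the points $\mbf{x}^{\fml{S}}_b$, whose $i$-th coordinate lies in $\{v_i,w_i\}$, so whenever $|\mbb{D}_i|>2$ or $w_i=v_i$ there are points of $\mbb{F}$ invisible to BShap, letting relevancy and the scores $\svn{b}$ come apart and violate \eqref{eq:compliance}. The paper phrases this as ``toggle $\kappa(\mbf{z})$ on an unvisited point to change the AXps without changing the scores'' and leaves the counterexample implicit, whereas you construct it explicitly and add the sharper remark that $w_i=v_i$ forces $\Delta_i\equiv 0$ for every classifier; this is a fleshed-out version of the same argument, not a different one.
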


\begin{proof}
  By contradiction, let us consider $i\in\fml{F}$, such that either
  $|\mbb{D}_i|>2$ or $w_i=v_i$. Then there exists a point
  $\mbf{z}\in\mathbb{F}$ such that $z_i\not\in\{v_i,w_i\}$.
  By construction, for each $\fml{S}\subseteq\fml{F}$,
  $\mbf{x}^{\fml{S}}_b$ is different from $\mbf{z}$. Thus, $\cfn{b}$
  and so $\svn{b}$ do not depend on $\kappa(\mbf{z})$. Therefore, we
  can use the value of $\kappa(\mbf{z})$ to change the AXps (and CXps)
  without modifying the BShap scores. As there are at least
  $2^{(|\fml{F}|-1)}$ such points $\mbf{z}$, it is plain that
  constructing counterexample is simple.
\end{proof}

\paragraph{BShap also misleads.}
%~\\
%
The following notation is used $\fml{S}\subseteq\fml{F}$, let
$\mbf{v}^{\fml{S}}$ be defined by
$v^{\fml{S}}_{i}=\tn{ITE}(i\in\fml{S},v_i,\neg{v_i})$, with
$i\in\fml{F}$.

For $\fml{S}\subseteq\fml{F}$, then
$\cfn{b}(\fml{S})=\kappa(\mbf{v}^{\fml{S}})$.

\begin{proposition}
  $\cfn{b}$ misleads.
\end{proposition}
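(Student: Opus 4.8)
The plan is to establish the claim by a single explicit counterexample, since the statement ``$\cfn{b}$ misleads'' is by definition the failure of the biconditional in \eqref{eq:compliance}, and one instance in which an irrelevant feature receives a nonzero score already breaks it. By the preceding proposition, $\cfn{b}$ is only well defined when every domain is boolean and $\mbf{w}=\neg\mbf{v}$, so I would work in $\mbb{F}=\{0,1\}^m$ with that baseline; there $\mbf{v}^{\fml{S}}$ flips exactly the features outside $\fml{S}$, the $2^m$ resulting points are all distinct, and $\cfn{b}(\fml{S})=\kappa(\mbf{v}^{\fml{S}})$.

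Concretely, I would take $m=2$, the classifier $\kappa(x_1,x_2)=x_1\lor\neg x_2$, target $\mbf{v}=(1,1)$ with $q=\kappa(\mbf{v})=1$, and hence $\mbf{w}=(0,0)$. First I would pin down relevance: since $\kappa(\mbf{v})=1$, \cref{eqzk} gives $\similar=\kappa$, so the WAXps are exactly the feature sets that force $\kappa=1$. Fixing $x_1=1$ already forces $\kappa=1$, so $\{1\}$ is a WAXp and in fact the unique AXp (the empty set is not a WAXp because $\kappa(0,1)=0$, and $\{2\}$ fails for the same reason); consequently feature $2$ lies in no AXp, i.e.\ $\irrelevant(2)$ holds. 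Next I would compute the BShap score of feature $2$ from \eqref{eq:sv} using the values $\cfn{b}(\emptyset)=\kappa(0,0)=1$, $\cfn{b}(\{1\})=\kappa(1,0)=1$, $\cfn{b}(\{2\})=\kappa(0,1)=0$, $\cfn{b}(\{1,2\})=\kappa(1,1)=1$, so that $\svn{b}(2)=\tfrac12\big(\cfn{b}(\{2\})-\cfn{b}(\emptyset)\big)+\tfrac12\big(\cfn{b}(\{1,2\})-\cfn{b}(\{1\})\big)=-\tfrac12$. Since $\irrelevant(2)$ holds yet $\svn{b}(2)\neq0$, equation \eqref{eq:compliance} fails --- an irrelevant feature receives a nonzero score --- which is exactly what ``$\cfn{b}$ misleads'' asserts.

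The only point needing care, and the step I would treat as the crux, is choosing the values of $\kappa$ at the three non-target points so that feature $2$ is \emph{formally} inert (in no AXp) while the flipped-baseline evaluation still leaves a nonzero net marginal contribution. The choice above does this precisely because $\kappa(0,0)\neq\kappa(0,1)$: the baseline $\mbf{w}$ sits where $x_2$ does change $\kappa$, so the coalition step $\emptyset\to\{2\}$ contributes $-\tfrac12$ even though no \emph{sufficient} set for the prediction ever requires feature $2$. Everything else is a two-feature Shapley computation with coefficients $\varsigma(0)=\varsigma(1)=\tfrac12$, so no further technical obstacle arises; if a larger or more ``natural'' classifier is preferred, the same phenomenon transplants to any instance where the flipped baseline disagrees with $\mbf{v}$ on an otherwise irrelevant coordinate.
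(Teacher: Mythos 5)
Your proof is correct and follows essentially the same route as the paper's: a two-feature counterexample with target $\mbf{v}=(1,1)$ in which feature $1$ is the unique AXp, feature $2$ is irrelevant, and yet the BShap computation assigns feature $2$ a nonzero score, breaking the biconditional in \eqref{eq:compliance}. The only difference is the concrete classifier --- the paper uses the three-valued $\kappa(x_1,x_2)=\tn{ITE}(x_1=1,1,2x_2)$, which additionally exhibits the converse violation ($\svn{b}(1)=0$ for the relevant feature), whereas your boolean $\kappa=x_1\lor\neg x_2$ exhibits only the irrelevant-but-nonzero direction, which is nonetheless sufficient.
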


\begin{proof}
  Let $\kappa(x_1,x_2)=\tn{ITE}(x_1=1,1,2x_2)$, and instance
  $(\mbf{v},c)=((1,1),1)$.\\
  It is plain that feature 1 influences both selecting the prediction
  1 and changing the prediction to some other value. In contrast,
  feature 2 has not influence in either setting or changing the
  prediction of class 1.\\
  It is also plain that the set of AXps is $\{\{1\}\}$, and also that
  $\kappa(x_1,x_2)=1$ iff $x_1=1$.\\
  However, if we compute $\svn{b}$, we get $\svn{b}(1)=0$ and
  $\svn{b}(2)=1$, which is of course misleading.\\
  To confirm the SHAP scores, we proceed as follows.
  $\cfn{b}(\emptyset)=\kappa(0,0)=0$,
  $\cfn{b}(\{1\})=\kappa(1,0)=1$,
  $\cfn{b}(\{2\})=\kappa(0,1)=2$, and
  $\cfn{b}(\{1,2\})=\kappa(1,1)=1$.\\
  Thus,
  $\Delta_{b}(1,\emptyset)=\cfn{b}(\{1\})-\cfn{b}(\emptyset)=1$,
  $\Delta_{b}(1,\{2\})=\cfn{b}(\{1,2\})-\cfn{b}(\{2\})=-1$,
  $\Delta_{b}(2,\emptyset)=\cfn{b}(\{2\})-\cfn{b}(\emptyset)=2$,
  $\Delta_{b}(2,\{1\})=\cfn{b}(\{1,2\})-\cfn{b}(\{2\})=0$.\\
  And finally,
  $\svn{b}(1)=\sfrac{(\Delta_{b}(1,\{2\})+\Delta_{b}(1,\emptyset))}{2}=0$,
  $\svn{b}(2)=\sfrac{(\Delta_{b}(2,\{1\})+\Delta_{b}(2,\emptyset))}{2}=1$.
  \qedhere
\end{proof}

%\xtodo{}

%\clearpage

%
% Acknowledgements should only appear in the accepted version.
\section*{Acknowledgements}

%\textbf{Do not} include acknowledgements in the initial version of
%the paper submitted for blind review.

%If a paper is accepted, the final camera-ready version can (and
%probably should) include acknowledgements. In this case, please
%place such acknowledgements in an unnumbered section at the
%end of the paper. Typically, this will include thanks to reviewers
%who gave useful comments, to colleagues who contributed to the ideas,
%and to funding agencies and corporate sponsors that provided financial
%support.

We thank the anonymous reviewers for the helpful comments.
This work was supported in part by the Spanish Government under
grant PID2023-152814OB-I00, and by ICREA starting funds.
This work was supported in part by the National Research Foundation,
Prime Minister’s Office, Singapore under its Campus for Research
Excellence and Technological Enterprise (CREATE) programme.
This work was also supported in part by the AI Interdisciplinary
Institute ANITI, funded by the French program ``Investing for the
Future -- PIA3'' under Grant agreement no.\ ANR-19-PI3A-0004.

% RequiredL: \usepackage{etoolbox}
%\providetoggle{mkbbl}
\newtoggle{mkbbl}
% Contents if using bibtex: "\settoggle{mkbbl}{true}"
% Contents if inputing pre-generated file: "\settoggle{mkbbl}{false}"

\settoggle{mkbbl}{false}

\iftoggle{mkbbl}{
    \bibliography{refs}
}{
  % Import bibl (original .bbl) file
  \input{paper.bibl}
}

\clearpage

\section{Supplemental Materials} \label{app:annexes}

%
%%%%%%%%%%%%%%%%%%%%%%%%%%%%%%%%%%%%%%%%%%%%%%%%%%%%%%%%%%%%%%%%%%%%%%%%%%%%%%%%
%% DDBCs
%%%%%%%%%%%%%%%%%%%%%%%%%%%%%%%%%%%%%%%%%%%%%%%%%%%%%%%%%%%%%%%%%%%%%%%%%%%%%%%%
%%

\subsection{Norm $\pnorm{p}$} \label{ssec:ddbc}
%~\\
%
The distance between two vectors $\mbf{v}$ and $\mbf{u}$ is denoted by
$\lVert\mbf{v}-\mbf{u}\rVert$, and the actual definition depends on
the norm being considered.
Different norms $\pnorm{p}$ can be considered.
For $p\ge1$, the $p$-norm is defined as follows~\citep{horn-bk12}:
\begin{equation}
  \begin{array}{lcl}
    \lVert\mbf{x}\rVert_{p} & {:=} &
    \left(\sum\nolimits_{i=1}^{m}|x_i|^{p}\right)^{\sfrac{1}{p}}
    %\\
  \end{array}
\end{equation}

Let $d_i=1$ if $x_i\not=0$, and let $d_i=0$ otherwise. Then, for
$p=0$, we define the 0-norm, $\pnorm{0}$, as
follows~\citep{robinson-bk03}:
\begin{equation}
  \begin{array}{lcl}
    \lVert\mbf{x}\rVert_{0} & {:=} &
    %\left(
    \sum\nolimits_{i=1}^{m}d_i
    %\right) %x_i\not=0
    %\\
  \end{array}
\end{equation}

In general, for $p\ge1$, $\pnorm{p}$ denotes the Minkowski distance.
Well-known special cases include
the Manhattan distance $\pnorm{1}$,
the Euclidean distance $\pnorm{2}$, and
the Chebyshev distance $\pnorm{\infty}=\lim_{p\to\infty}l_p$.
$\pnorm{0}$ denotes the Hamming distance.
%
%
%In the remainder of the paper, we use $p\in\mbb{N}_0$ (but we also
%allow $p=\infty$ for the Chebyshev distance).

\subsection{Deterministic Decomposable Boolean Circuits (DDBCs)}
%~\\
%
For some complexity results, we analyze
DDBCs~\cite{barcelo-aaai21,barcelo-jmlr23}\footnote{%
The definition of DDBC mimics the one %is adapted from the one
in~\cite{barcelo-jmlr23}.}.
A boolean circuit $C$ is defined on a set of (input) variables $X$ and it
is represented as a directed acyclic graph, where each node is
referred to as a gate, and where (i) a node with no input edges is a
either a variable gate, and takes a label from $X$, or it is a
constant gate, and takes a label from $\{0,1\}$; (ii) a node with
incoming edges is a either a AND, OR or NOT logic gate, where NOT
gates have exactly one input; (iii) exactly one node has no output
edges, and denotes the output gate of $C$.
Given some circuit $C_g$, $\tn{var}(C_g)$ denotes the set of elements
$x\in{X}$ such that some variable gate node of $C_g$ is labeled with
$x$.
A DDBC is a boolean circuit where OR gates are \emph{deterministic}
and AND gates are \emph{decomposable}. A 2-input OR gate,
$g=\tn{OR}(g_1,g_2)$ is deterministic if for any assignment to the
inputs of the circuit, the inputs of the gate are \emph{not} both
assigned value 1. A 2-input AND gate, 
$g=\tn{AND}(g_1,g_2)$, is decomposable if $\tn{var}(C_{g_1})$
%the set of variables in the recursive input of $g_1$
%%one of the gate's inputs
is disjoint from $\tn{var}(C_{g_2})$.
%the set of variables in the recursive input of $g_2$.
%%the other of the gate's inputs.
%
It is well-known that any DDBC can be converted into a DDBC
containing only 2-input AND and OR gates in polynomial time~\cite{barcelo-jmlr23}.
DDBCs generalize %other well-known %restricted boolean circuits, including
\emph{deterministic decomposable negation
normal form} (d-DNNF) circuits~\cite{darwiche-jair02}.
Furthermore, we consider a recent generalization of DDBCs where inputs
are allowed to take multi-valued discrete
values~\cite{barcelo-jmlr23}.

%\clearpage

%
%%%%%%%%%%%%%%%%%%%%%%%%%%%%%%%%%%%%%%%%%%%%%%%%%%%%%%%%%%%%%%%%%%%%%%%%%%%%%%%%
%% Additional examples
%%%%%%%%%%%%%%%%%%%%%%%%%%%%%%%%%%%%%%%%%%%%%%%%%%%%%%%%%%%%%%%%%%%%%%%%%%%%%%%%
%%

\subsection{Additional examples}

\paragraph{Difference in SHAP scores for example DT.}
%~\\
%
\cref{ex02} shows the similarity predicate $\similar_{2}$ for an
example classifier $\kappa_{2}$, represented by a decision tree
(DT).
%
%the classifier $\kappa_{2}$ shown in~\cref{ex04:dt}.
%
\begin{figure*}[t]
  \begin{subfigure}[b]{0.475\linewidth}
    \centering
    \scalebox{0.875}{
      % Concocted example
%%
%\tikzset{every label/.style={xshift=-0.35ex,
%  yshift=-5.25ex,
%  text width=1ex,
%  align=right, inner sep=1pt, font=\tiny, text=midblue}}
%%
%\tikzset{tlabel/.style={xshift=0.25ex, yshift=1.75ex, text width=1ex,
%    align=right, inner sep=1pt, font=\tiny, text=midblue}}
%%%\tikzset{every node/.style={---rectangle---}}
%
\forestset{
  BDT/.style={
    for tree={
      l=1.5cm,s sep=1.15cm,
      if n children=0{}{circle}, %rectangle
      %if n children=0{}{draw},
      draw=midblue,%draw=black,%
      text=midblue,%text=black,%
      edge={
        my edge
      },
      %if n=1{
      %  edge+={0 my edge},
      %}{},
      edge=thick,
    }
  },
}
\begin{forest}
  BDT
  [{$x_1$}, label={[yshift=-6.875ex]{{\tiny1}}} 
    [{$x_4$}, label={[yshift=-6.875ex]{{\tiny2}}}, %edge={very thick}, %top-left=x
      edge label={node[midway,left,xshift=-0.5pt] {{\scriptsize$\in\{0\}$}}}
      [{$x_2$}, label={[yshift=-6.875ex]{{\tiny4}}}, %xshift=-3.075ex,yshift=-3.575ex
        edge label={node[midway,left,xshift=-1.5pt] {{\scriptsize$\in\{1\}$}}}
        [{$x_3$}, label={[yshift=-6.875ex]{{\tiny6}}}, %xshift=-3.075ex,yshift=-3.575ex
          edge label={node[midway,left,xshift=-1.5pt] {{\scriptsize$\in\{0\}$}}}
          [\ncolor{\textbf{5}}, label={[yshift=-5.25ex]{{\tiny8}}},
            edge label={node[midway,left,xshift=-0.5pt] {{\scriptsize$\in\{0\}$}}}, rectangle, fill={torange1!10} ]
          [\ncolor{\textbf{2}}, label={[yshift=-5.25ex]{{\tiny9}}},
            edge label={node[midway,right,xshift=-0.575pt] {{\scriptsize$\in\{1\}$}}}, rectangle, fill={torange1!10} ]
        ]
        [{$x_3$}, label={[yshift=-6.875ex]{{\tiny7}}}, %xshift=-3.075ex,yshift=-3.575ex
          edge label={node[midway,right,xshift=0.25pt] {{\scriptsize$\in\{1\}$}}}
          [\ncolor{\textbf{4}}, label={[yshift=-5.25ex]{{\tiny10}}},
            edge label={node[midway,left,xshift=-0.5pt] {{\scriptsize$\in\{0\}$}}}, rectangle, fill={torange1!10} ]
          [\ncolor{\textbf{9}}, label={[yshift=-5.25ex]{{\tiny11}}},
            edge label={node[midway,right,xshift=-0.575pt] {{\scriptsize$\in\{1\}$}}}, rectangle, fill={torange1!10} ]
        ]
      ]
      [\ncolor{\textbf{0}}, label={[yshift=-5.25ex]{{\tiny5}}},
        edge label={node[midway,right,xshift=-0.5pt] {{\scriptsize$\in\{0,2\}$}}}, rectangle, fill={torange1!10} ]
    ]
    [\ncolor{\textbf{1}}, label={[yshift=-5.25ex]{{\tiny3}}},
      edge={very thick,draw=tblue2}, edge label={node[midway,right,xshift=0.5pt] {{\scriptsize$\in\{1\}$}}}, rectangle, fill={torange1!10} ]
  ]
\end{forest}
    }
    \caption{Decision tree.}
    \label{ex02:dt}
  \end{subfigure}
  \begin{subfigure}[b]{0.475\linewidth}  
    \centering
    \scalebox{0.875}{
      % Concocted example
%%
%\tikzset{every label/.style={xshift=-0.35ex,
%  yshift=-5.25ex,
%  text width=1ex,
%  align=right, inner sep=1pt, font=\tiny, text=midblue}}
%%
%\tikzset{tlabel/.style={xshift=0.25ex, yshift=1.75ex, text width=1ex,
%    align=right, inner sep=1pt, font=\tiny, text=midblue}}
%%%\tikzset{every node/.style={---rectangle---}}
%
\forestset{
  BDT/.style={
    for tree={
      l=1.5cm,s sep=1.15cm,
      if n children=0{}{circle}, %rectangle
      %if n children=0{}{draw},
      draw=midblue,%draw=black,%
      text=midblue,%text=black,%
      edge={
        my edge
      },
      %if n=1{
      %  edge+={0 my edge},
      %}{},
      edge=thick,
    }
  },
}
\begin{forest}
  BDT
  [{$x_1$}, label={[yshift=-6.875ex]{{\tiny1}}} 
    [{$x_4$}, label={[yshift=-6.875ex]{{\tiny2}}}, %edge={very thick}, %top-left=x
      edge label={node[midway,left,xshift=-0.5pt] {{\scriptsize$\in\{0\}$}}}
      [{$x_2$}, label={[yshift=-6.875ex]{{\tiny4}}}, %xshift=-3.075ex,yshift=-3.575ex
        edge label={node[midway,left,xshift=-1.5pt] {{\scriptsize$\in\{1\}$}}}
        [{$x_3$}, label={[yshift=-6.875ex]{{\tiny6}}}, %xshift=-3.075ex,yshift=-3.575ex
          edge label={node[midway,left,xshift=-1.5pt] {{\scriptsize$\in\{0\}$}}}
          [\rhlight{\textbf{0}}, label={[yshift=-5.25ex]{{\tiny8}}},
            edge label={node[midway,left,xshift=-0.5pt] {{\scriptsize$\in\{0\}$}}}, rectangle, fill={tred1!10} ]
          [\rhlight{\textbf{0}}, label={[yshift=-5.25ex]{{\tiny9}}},
            edge label={node[midway,right,xshift=-0.575pt] {{\scriptsize$\in\{1\}$}}}, rectangle, fill={tred1!10} ]
        ]
        [{$x_3$}, label={[yshift=-6.875ex]{{\tiny7}}}, %xshift=-3.075ex,yshift=-3.575ex
          edge label={node[midway,right,xshift=0.25pt] {{\scriptsize$\in\{1\}$}}}
          [\rhlight{\textbf{0}}, label={[yshift=-5.25ex]{{\tiny10}}},
            edge label={node[midway,left,xshift=-0.5pt] {{\scriptsize$\in\{0\}$}}}, rectangle, fill={tred1!10} ]
          [\rhlight{\textbf{0}}, label={[yshift=-5.25ex]{{\tiny11}}},
            edge label={node[midway,right,xshift=-0.575pt] {{\scriptsize$\in\{1\}$}}}, rectangle, fill={tred1!10} ]
        ]
      ]
      [\rhlight{\textbf{0}}, label={[yshift=-5.25ex]{{\tiny5}}},
        edge label={node[midway,right,xshift=-0.5pt] {{\scriptsize$\in\{0,2\}$}}}, rectangle, fill={tred1!10} ]
    ]
    [\dghlight{\textbf{1}}, label={[yshift=-5.25ex]{{\tiny3}}},
      edge label={node[midway,right,xshift=0.5pt] {{\scriptsize$\in\{1\}$}}}, rectangle, fill={tgreen1!10} ]
    %edge={very thick,draw=tblue2},
  ]
\end{forest}
    }
    \caption{similarity predicate.}
    \label{ex02:cf}
  \end{subfigure}
  \caption{Example decision tree (DT), representing classifier
    $\kappa_2$, and respective similarity predicate $\similar_{2}$. The
  target sample is $((1,1,1,1),1)$.} \label{ex02}
\end{figure*}
Given the obtained characteristic function, \cref{tab:shap_vs_sshap}
shows the computed SHAP scores, obtained with both
SHAP~\cite{lundberg-nips17} and sSHAP. (Given the simple classifiers
being considered, both SHAP and sSHAP obtain the exact SHAP scores.)
\begin{table}[t]
  \centering
  \renewcommand{\tabcolsep}{0.4125em}
  \begin{tabular}{cccccc} \toprule
    Classifier & $\sv(1)$  & $\sv(2)$  & $\sv(3)$  & $\sv(4)$ & Rank \\
    \toprule
    $\kappa_{2}$ & 0.000 & 0.111 & 0.056 & -0.500 &
    $\langle4,2,3,1\rangle$ \\
    $\similar_{2}$ & 0.500 & 0 & 0 & 0 &
    $\langle1,2:3:4\rangle$ \\
    \bottomrule
  \end{tabular}
  \caption{SHAP scores $\svn{E}$ and $\svn{S}$, respectively for
    $\kappa_{2}$ and $\similar_{2}$.} 
  \label{tab:shap_vs_sshap}
\end{table}
For this concrete example and instance, the results confirm that the
new characteristic function $\cfn{s}$ enables obtaining SHAP scores
that are not misleading.
As we shown next, the same situation is observed for other
classifiers.

%\clearpage

%
%%%%%%%%%%%%%%%%%%%%%%%%%%%%%%%%%%%%%%%%%%%%%%%%%%%%%%%%%%%%%%%%%%%%%%%%%%%%%%%%
%% Ranking models
%%%%%%%%%%%%%%%%%%%%%%%%%%%%%%%%%%%%%%%%%%%%%%%%%%%%%%%%%%%%%%%%%%%%%%%%%%%%%%%%
%%

\subsection{Analysis of Ranking Models}
\label{app:rms}

This section shows how the framework proposed in the paper can also be
used for computing abductive explanations (and so formal SHAP scores)
when the ML model implements a ranking model.
A well-known instantiation of a ranking model is the softmax layer for
a neural network (NN).
Throughout this section, the analogy with a softmax layer will help
understanding the definitions used.
We assume a set of indices $\fml{U}=\{1,\ldots,k_{\fml{U}}\}$ each
indexing one output of the ranking model. The outputs take values from
a set $\mbb{C}$. The selection of each index maps to a categorical or  
ordinal value taken from $\mbb{T}$. The output is selected to be the
index computing the largest value among the indices in $\fml{U}$.
Given the above, the following functions are considered:
\begin{enumerate}[nosep]
\item $\rho:\fml{U}\times\mbb{F}\to\mbb{C}$;
\item $\pi:\fml{U}\to\mbb{T}$; and
\item $\eta:\mbb{F}\to\mbb{T}$.
\end{enumerate}

Moreover, we introduce an auxiliary function $\chi:\mbb{F}\to\fml{U}$,
$\chi(\mbf{x})=\argmax\{\rho(j,\mbf{x})\,|\,j\in\fml{U}\}$,
i.e. $\chi(\mbf{x})$ gives the index of the ranking model predicting
the largest value.
Finally, the prediction is given by
$\eta(\mbf{x})=\pi(\chi(\mbf{x}))$. (Observe that $\chi$ serves only
to simplify the notation.)
When referring to a ranking model $\fml{M}_{e}$, a tuple
$(\fml{F},\mbb{F},\fml{U},\mbb{C},\mbb{T},\rho,\pi,\eta)$ is assumed.

Let $\iota(t):\mbb{T}\to\fml{U}$ assign to each prediction $t$ the
index of the ranking model that predicts $t$.
For some input $\mbf{x}$, the output is distinguished from that of
$\mbf{v}$ if the following predicate,
\[
\tau(\mbf{x};\fml{E},\delta) ~~ := ~~ [\chi(\mbf{x}) = \chi(\mbf{v})]
\]
holds true. (We could also account for a difference of $\delta$ of the
model's output, but that is beyond the goals of this paper.)

Thus, the definitions of abductive (or contrastive) explanations and
the (novel) definitions of SHAP scores also hold in the case when a
ranking model is assumed.

\end{document}